\newcommand{\genours}{\textsc{NFN}\xspace}
\newcommand{\ours}{$\textrm{NFN}_\textrm{NP}$\xspace}
\newcommand{\fullours}{$\textrm{NFN}_\textrm{HNP}$\xspace}
\newcommand{\ptours}{$\textrm{NFN}_\textrm{PT}$\xspace}
\newcommand{\statnet}{\textsc{StatNN}\xspace}
\newcommand{\mlp}{\textsc{MLP}\xspace}
\newcommand{\mlpaug}{$\textrm{MLP}_\textrm{Aug}$\xspace}
\newcommand{\group}{\tilde{\mathcal{S}}}
\newcommand{\iogroup}{\mathcal{S}}
\newcommand{\calU}{\mathcal{U}}
\newcommand{\calW}{\mathcal{W}}
\newcommand{\calV}{\mathcal{V}}
\newcommand{\UU}{\mathbb{U}}
\newcommand{\WW}{\mathbb{W}}
\newcommand{\VV}{\mathbb{V}}
\newcommand{\II}{\mathbb{I}}
\newcommand{\R}{\mathbb{R}}
\newcommand{\E}{\mathbb{E}}
\newcommand{\ci}{c_i}
\newcommand{\co}{c_o}
\newcommand{\paren}[1]{\left( #1 \right)}
\newcommand{\sqbra}[1]{\left[ #1 \right]}
\newcommand{\wt}[2]{W^{(#1)}_{#2}}
\newcommand{\wtmat}[1]{W^{(#1)}}
\newcommand{\bs}[2]{v^{(#1)}_{#2}}
\newcommand{\bsvec}[1]{v^{(#1)}}
\newcommand{\idx}[3]{#1^{#2}_{#3}}
\newcommand{\ve}{\text{vec}}
\newcommand{\aaa}[2]{a^{#1}_{#2}}
\newcommand{\bb}[2]{b^{#1}_{#2}}
\newcommand{\cc}[2]{c^{#1}_{#2}}
\newcommand{\dd}[2]{d^{#1}_{#2}}
\newcommand{\case}[1]{\begin{cases} #1 \end{cases}}
\newcommand{\hyperio}{H}
\newcommand{\hyper}{\tilde{H}}
\newcommand{\blue}[1]{\textcolor{blue}{#1}}
\newcommand{\invio}{P}
\newcommand{\inv}{\tilde{P}}
\newcommand{\range}[1]{\llbracket #1 \rrbracket}
\newcommand{\orb}{\text{Orbit}}
\newcommand{\inr}{\text{SIREN}}
\newtheorem{proposition}{Proposition}
\title{Permutation Equivariant Neural Functionals}
\author{%
  Allan Zhou$^1$ \quad Kaien Yang$^1$ \quad Kaylee Burns$^1$ \quad Adriano Cardace$^2$ \quad Yiding Jiang$^3$ \\
  \textbf{Samuel Sokota}$^3$ \quad \textbf{J. Zico Kolter}$^3$ \quad \textbf{Chelsea Finn}$^1$\\
  $^1$Stanford University \quad $^2$University of Bologna \quad $^3$Carnegie Mellon University \\
  \texttt{ayz@cs.stanford.edu} \\
}
\begin{document}

\maketitle

\begin{abstract}
This work studies the design of neural networks that can process the weights or gradients of other neural networks, which we refer to as \textit{neural functional networks} (NFNs). Despite a wide range of potential applications, including learned optimization, processing implicit neural representations, network editing, and policy evaluation, there are few unifying principles for designing effective architectures that process the weights of other networks.
We approach the design of neural functionals through the lens of symmetry, in particular by focusing on the permutation symmetries that arise in the weights of deep feedforward networks because hidden layer neurons have no inherent order.
We introduce a framework for building \textit{permutation equivariant} neural functionals, whose architectures encode these symmetries as an inductive bias.
The key building blocks of this framework are \textit{NF-Layers} (neural functional layers) that we constrain to be permutation equivariant through an appropriate parameter sharing scheme.
In our experiments, we find that permutation equivariant neural functionals are effective on a diverse set of tasks that require processing the weights of MLPs and CNNs, such as predicting classifier generalization, producing ``winning ticket'' sparsity masks for initializations, and classifying or editing implicit neural representations (INRs).
In addition, we provide code for our models and experiments\footnote{\url{https://github.com/AllanYangZhou/nfn}}.
\end{abstract}

\section{Introduction}
\label{sec:intro}

As deep neural networks have become increasingly prevalent across various domains, there has been a growing interest in techniques for processing their weights and gradients as data. Example applications include learnable optimizers for neural network training~\citep{bengio2013optimization,runarsson2000evolution,andrychowicz2016learning,metz2022velo}, extracting information from implicit neural representations of data~\citep{stanley2007compositional,mildenhall2020nerf,sitzmann2020implicit}, corrective editing of network weights~\citep{sinitsin2020editable,de2021editing,mitchell2021fast}, policy evaluation \citep{harb2020policy}, and Bayesian inference given networks as evidence \citep{sokota2022a}.
We refer to functions of a neural network's weight space (such as weights, gradients, or sparsity masks) as \textit{neural functionals}; when these functions are themselves neural networks, we call them \textit{neural functional networks} (\genours{}s).

In this work, we design neural functional networks by incorporating relevant symmetries directly into the architecture, following a general line of work in ``geometric deep learning''~\citep{cohen2016group,ravanbakhsh2017equivariance,kondor2018generalization,bronstein2021geometric}.
For neural functionals, the symmetries of interest are transformations of a network's weights that preserve the network's behavior. 
In particular, we focus on \textit{neuron permutation symmetries}, which are those that arise from the fact that the neurons of hidden layers have no inherent order.

\begin{figure}
    \centering
    \includegraphics[width=\textwidth]{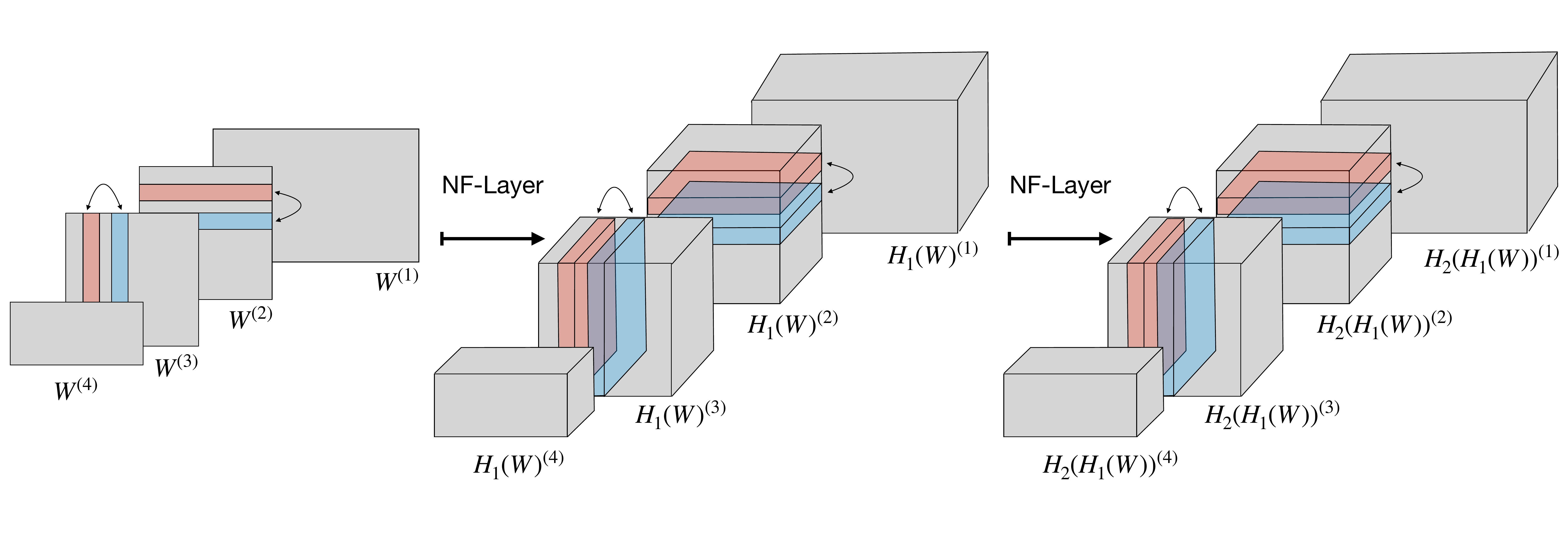}
    \caption{The internal operation of our permutation equivariant neural functionals (\genours{}s). The \genours{} processes the input weights through a series of equivariant NF-Layers, with each one producing \textit{weight-space features} with varying numbers of channels. In this example, a neuron permutation symmetry simultaneously permutes the rows of $\wtmat{2}$ and the columns of $\wtmat{3}$. This permutation propagates through the NFN in an equivariant manner.}
    \vspace{-2em}
    \label{fig:hnet-diagram}
\end{figure}

Neuron permutation symmetries are simplest in feedforward networks, such as multilayer perceptrons (MLPs) and basic convolutional neural networks (CNNs). These symmetries are induced by the fact that the neurons in each hidden layer of a feedforward network can be arbitrarily permuted without changing its behavior~\citep{hecht1990algebraic}. In MLPs, permuting the neurons in hidden layer $i$ corresponds to permuting the rows of the weight matrix $\wtmat{i}$, and the columns of the next weight matrix $\wtmat{i+1}$ as shown on the left-hand side of Figure~\ref{fig:hnet-diagram}. Note that the same permutation must be applied to the rows $\wtmat{i}$ and columns of $\wtmat{i+1}$, since applying \textit{different} permutations generally changes network behavior and hence does not constitute a neuron permutation symmetry.

We introduce a new framework for constructing neural functional networks that are invariant or equivariant to neuron permutation symmetries. Our framework extends a long line of work on permutation equivariant architectures~\citep{qi2016pointnet,zaheer2017deep,hartford2018deep,thiede2020general,maron2020learning} that design equivariant layers for a particular permutation symmetry of interest.
Specifically, we introduce neural functional layers (NF-Layers) that operate on weight-space features (see Figure~\ref{fig:hnet-diagram}) while being equivariant to neuron permutation symmetries.
Composing these NF-Layers with pointwise non-linearities produces equivariant neural functionals.

We propose different NF-Layers depending on the assumed symmetries of the input weight space: either only the hidden neurons of the feedforward network can be permuted (hidden neuron permutation, HNP), or all neurons, including inputs and outputs, can be permuted (neuron permutation, NP).
Although the HNP assumption is typically more appropriate, the corresponding NF-Layers can be parameter inefficient and computationally infeasible in some settings. In contrast, NF-Layers derived under NP assumptions often lead to much more efficient architectures, and, when combined with a positional encoding scheme we design, can even be effective on tasks that require breaking input and output symmetry.
For situations where invariance is required, we also define invariant NF-Layers that can be applied on top of equivariant weight-space features.

Finally, we investigate the applications of permutation equivariant neural functionals on tasks involving both feedforward MLPs and CNNs. Our first two tasks require (1) predicting the test accuracy of CNN image classifiers and (2) classifying implicit neural representations (INRs) of images and 3D shapes.
We then evaluate NFNs on their ability to (3) predict good sparsity masks for initializations (also called \textit{winning tickets}~\citep{frankle2018lottery}), and on (4) a weight-space ``style-editing'' task where the goal is to modify the content an INR encodes by directly editing its weights. In multiple experiments across these diverse settings, we find that permutation equivariant neural functionals consistently outperform non-equivariant methods and are effective for solving weight space tasks.

\textbf{Relation to DWSNets.} The recent work of Navon et al. [45] recognized the potential of leveraging weight space symmetries to build equivariant architectures on deep weight spaces; they characterize a weight-space layer which is mathematically equivalent to our NF-Layer in the HNP setting. Their work additionally studies interesting universality properties of the resulting equivariant architectures, and demonstrates strong empirical results for a suite of tasks that require processing the weights of MLPs.
Our framework additionally introduces the NP setting, where we make stronger symmetry assumptions to develop equivariant layers with improved parameter efficiency and practical scalability. We also extend our NFN variants to process convolutional neural networks (CNNs) as input, leading to applications such as predicting the generalization of CNN classifiers (Section 3.1).

\section{Equivariant neural functionals}
\label{sec:method}
\begingroup
\renewcommand{\arraystretch}{1.5}
\begin{table}
    \centering
    \caption{Permutation symmetries of $L$-layer feedforward networks with $n_0, \ldots, n_L$ neurons at each layer. All feedforward networks are invariant under hidden neuron permutations (HNP), while NP assumes that input and output neurons can also be permuted. We show the corresponding equivariant NF-Layers which process weight-space features from $\calU$, with $\ci$ input channels and $\co$ output channels.
    }
    \label{table:notation}
    \begin{tabular}{c|c|c|c|c}
        \multirow{2}{*}{\textbf{Group}} & \multirow{2}{*}{\textbf{Abbrv}} & \multirow{2}{*}{\textbf{Permutable layers}} & \multicolumn{2}{|c}{\textbf{Equivariant NF-Layer}} \\
        & & & Signature & Parameter count\\
        \hline
        $\iogroup=\prod_{i=0}^L S_{n_i}$ & NP & All layers & $\hyperio: \calU^{\ci} \rightarrow \calU^{\co}$ & $O(\ci\co L^2)$\\
        $\group = \prod_{i=1}^{L-1} S_{n_i}$ & HNP & Hidden layers & $\hyper:\calU^{\ci} \rightarrow \calU^{\co}$ & $O\paren{\ci\co(L+n_0+n_L)^2}$\\
        --- & --- & None & $T:\calU^{\ci} \rightarrow \calU^{\co}$ & $\ci\co\dim(\calU)^2$
    \end{tabular}
\end{table}
\endgroup
We begin by setting up basic concepts related to (hidden) neuron permutation symmetries, before defining the equivariant NF-Layers in Sec.~\ref{sec:NF-Layer} and invariant NF-Layers in Sec.~\ref{sec:invariant}.
\subsection{Preliminaries}
\label{sec:prelim}
Consider an $L$-layer feedforward network having $n_{i}$ neurons at layer $i$, with $n_0$ and $n_L$ being the input and output dimensions, respectively. The network is parameterized by weights $W=\Set{\wtmat{i}\in\R^{n_i \times n_{i-1}} \mid i \in \range{1..L}}$ and biases $v = \set{\bsvec{i} \in \R^{n_i} \mid i \in\range{1..L}}$. We denote the combined collection $U \coloneqq (W,v)$ belonging to weight space, $\calU\coloneqq\calW \times \calV$.

Since the neurons in a hidden layer $i \in \{1,\cdots, L-1\}$ have no inherent ordering, the network is invariant to the symmetric group $S_{n_i}$ of permutations of the neurons in layer $i$. This reasoning applies to every hidden layer, so the network is invariant to $\group \coloneqq S_{n_1}\times \cdots \times S_{n_{L-1}}$, which we refer to as the \textbf{hidden neuron permutation} (HNP) group.
Under the stronger assumption that the input and output neurons are also unordered, the network is invariant to $\iogroup\coloneqq S_0 \times \cdots \times S_{n_L}$, which we refer to as the \textbf{neuron permutation} (NP) group.
We focus on the NP setting throughout the main text, and treat the HNP case in Appendix~\ref{appendix:equiv}. See Table~\ref{table:notation} for a concise summary of the relevant notation for each symmetry group we consider.

Consider an MLP and a permutation $\sigma=(\sigma_0, \cdots, \sigma_{L}) \in \iogroup$. The action of the neuron permutation group is to permute the rows of each weight matrix $W^{(i)}$ by $\sigma_i$, and the columns by $\sigma_{i-1}$. Each bias vector $v^{(i)}$ is also permuted by $\sigma_i$. So the action is $\sigma U \coloneqq (\sigma W, \sigma v)$, where:
\begin{equation}
    \label{eq:action}
    \idx{\sqbra{\sigma W}}{i}{jk} = \wt{i}{\sigma_{i}^{-1}(j), \sigma_{i-1}^{-1}(k)}, \quad
    \idx{\sqbra{\sigma v}}{i}{j} = \bs{i}{\sigma_{i}^{-1}(j)}.
\end{equation}

Until now we have used $U=(W,v)$ to denote actual weights and biases, but the inputs to a neural functional layer could be any weight-space \textit{feature} such as a gradient, sparsity mask, or the output of a previous NF-Layer (Figure~\ref{fig:hnet-diagram}). Moreover, we may consider inputs with $c \geq 1$ feature channels, belonging to $\calU^c = \bigoplus_{i=1}^c \calU$, the direct sum of $c$ copies of $\calU$. Concretely, each $U\in\calU^c$ consists of weights $W=\Set{\wtmat{i}\in\R^{n_i\times n_{i-1} \times c} \mid i\in\range{1..L}}$ and biases $v=\Set{\bsvec{i}\in\R^{n_i \times c} \mid i \in \range{1..L}}$, with the channels in the final dimension. The action defined in Eq.~\ref{eq:action} extends to the multiple channel case if we define $\wt{i}{jk} := \wt{i}{j,k,:} \in \R^c$ and $\bs{i}{j} := \bs{i}{j,:} \in \R^c$.

The focus of this work is on making neural functionals that are equivariant (or invariant) to neuron permutation symmetries. Letting $\ci$ and $\co$ be the number of input and output channels, we refer to a function $f:\calU^{\ci} \rightarrow \calU^{\co}$ as $\iogroup$\textbf{-equivariant} if $\sigma f(U) = f(\sigma U)$ for all $\sigma \in \iogroup$ and $U\in \calU^{\ci},$
where the action of $\iogroup$ on the input and output spaces is defined by Eq.~\ref{eq:action}. Similarly, a function $f:\calU^c\rightarrow \mathbb{R}$ is $\iogroup$\textbf{-invariant} if $f(\sigma U) = f(U)$ for all $\sigma$ and $U$.

If $f,g$ are equivariant, then their composition $f \circ g$ is also equivariant; if $g$ is equivariant and $f$ is invariant, then $f\circ g$ is invariant. Since pointwise nonlinearities are already permutation equivariant, our remaining task is to design a \textit{linear} NF-Layer that is $\iogroup$-equivariant. We can then construct equivariant neural functionals by stacking these NF-Layers with pointwise nonlinearities.

\subsection{Equivariant NF-Layers}
\label{sec:NF-Layer}
\begin{figure}
    \centering
    \includegraphics[width=\textwidth]{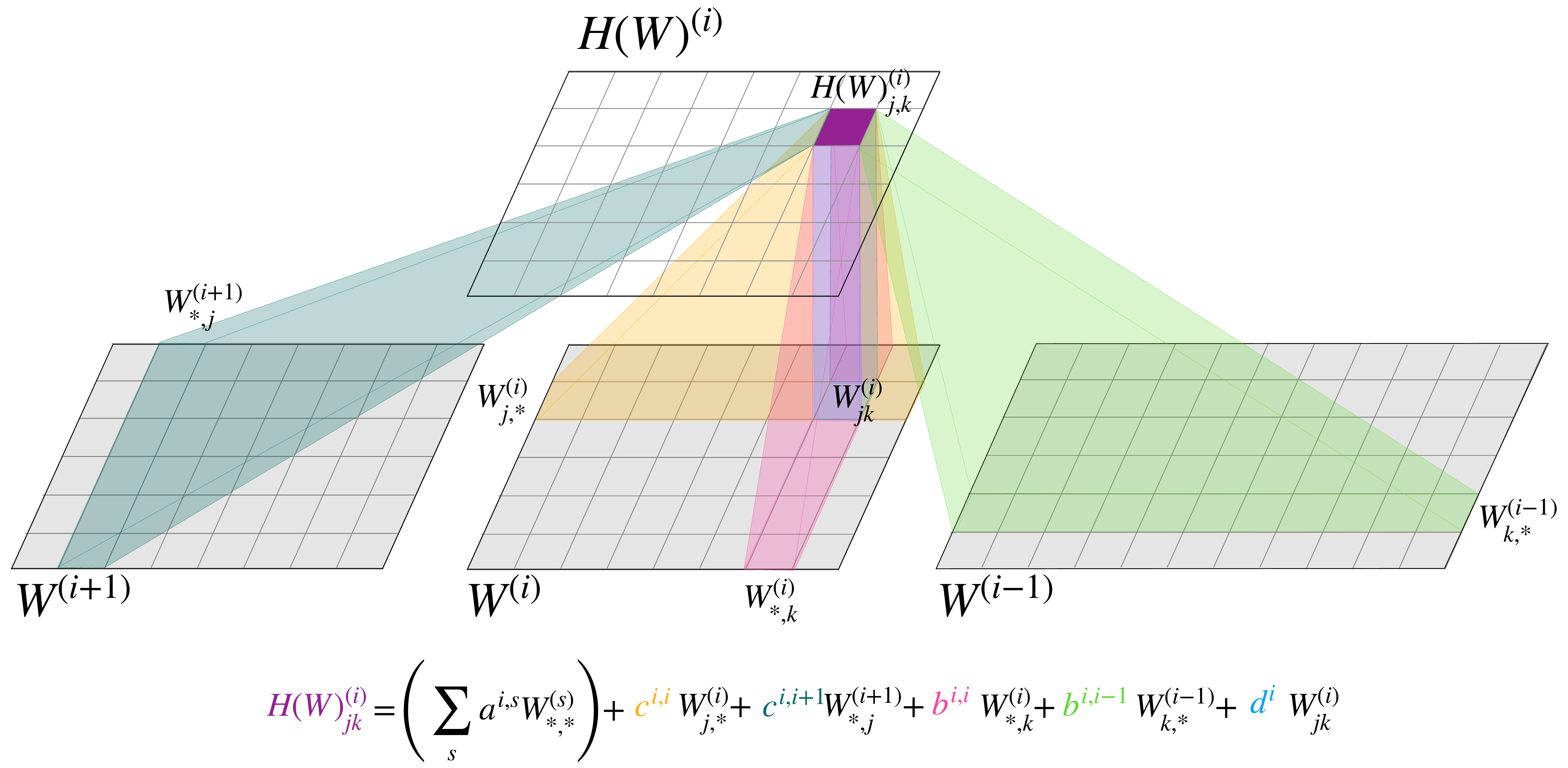}
    \caption{A permutation equivariant NF-Layer takes in weight-space features as input (bottom) and outputs transformed features (top), while respecting the neuron permutation symmetries of feedforward networks. This illustrates the computation of a single output element $\idx{H(W)}{i}{jk}$, defined in Eq.~\ref{eq:layer-simple}. Each output is a weighted combination of rows or column sums of the input weights, which preserves permutation symmetry. The first term contributes a weighted combination of row-and-column sums from \textit{every} input weight, though this is omitted for visual clarity.}
    \label{fig:layer-operation}
\end{figure}

We now construct a linear $\iogroup$-equivariant layer that serves as a key building block for neural functional networks. In the single channel case, we begin with generic linear layers $T(\cdot;\theta): \ve(U) \mapsto \theta \ve(U)$, where $\ve(U)\in\R^{\dim(U)}$ is $U$ flattened as a vector and $\theta \in \R^{\dim(\calU) \times \dim(\calU)}$ is a matrix of parameters. We show in Appendix~\ref{sec:orbit-motivation} that \textbf{any} $\iogroup$-equivariant $T(\cdot;\theta)$ must satisfy a system of constraints on $\theta$ known as equivariant \textit{parameter sharing}. We derive this parameter sharing by partitioning the entries of $\theta$ by the orbits of their indices under the action of $\iogroup$, with parameters shared in each orbit~\citep{ravanbakhsh2017equivariance}. Table~\ref{table:theta-constraint} of the appendix describes the parameter sharing in detail. 

Equivariant parameter sharing reduces the matrix-vector product $\theta \ve(U)$ to the NF-Layer we now present. For simplicity we ignore $\calV$ and assume here that $\calU=\calW$ and defer the full form to Eq.~\ref{eq:layer-full} in the appendix. Then $\hyperio:\calW^{\ci} \rightarrow \calW^{\co}$ maps input $\paren{\wtmat{1}, \cdots, \wtmat{L}}$ to $\left(\hyperio(W)^{(1)},\cdots, \hyperio(W)^{(L)}\right)$. Recall that the inputs are not necessarily weights, but could be arbitrary weight-space features including the output of a previous NF-Layer. For $\wtmat{i}\in\R^{n_i \times n_{i-1} \times \ci}$, the corresponding output is $\hyperio(W)^{(i)} \in \R^{n_i \times n_{i-1} \times \co}$ with entries computed:
\begin{equation}
    \label{eq:layer-simple}
    \hyperio(W)^{(i)}_{jk} = \left(\sum_s \blue{a^{i,s}} \wt{s}{\star,\star}\right)
    + \blue{b^{i,i}} \wt{i}{\star,k} + \blue{b^{i,i-1}}\wt{i-1}{k,\star}
    + \blue{c^{i,i}} \wt{i}{j,\star}
    + \blue{c^{i,i+1}} \wt{i+1}{\star,j}
    + \blue{d^i} \wt{i}{jk}.
\end{equation}
Note that the terms involving $\wtmat{i-1}$ or $\wtmat{i+1}$ should be omitted for $i=0$ and $i=L$, respectively, and $\star$ denotes summation or averaging over either the rows or columns. Recall that in the multi-channel case, each $\wt{i}{jk}$ is a vector in $\R^{\ci}$ so each \blue{parameter} is a $\co \times \ci$ matrix. We also provide a concrete pseudocode description of $\hyperio$ in Appendix~\ref{appendix:pseudocode}. Figure~\ref{fig:layer-operation} visually illustrates the NF-Layer in the single-channel case, showing how the row or column sums from each input contribute to each output. To gain intuition for the operation of $\hyperio$, it is straightforward to check $\iogroup$-equivariance:

\begin{proposition}
The NF-Layer $\hyperio:\calU^{\ci} \rightarrow \calU^{\co}$ (Eq.~\ref{eq:layer-simple} and Eq.~\ref{eq:layer-full}) is $\iogroup$-equivariant, where the group's action on input and output spaces is defined by Eq.~\ref{eq:action}. Moreover, any linear $\iogroup$-equivariant map $T:\calU^{\ci}\rightarrow \calU^{\co}$ is equivalent to $H$ for some choice of parameters $a,b,c,d$.
\end{proposition}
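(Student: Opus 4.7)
The plan is to handle the two directions separately: direct verification that $\hyperio$ is $\iogroup$-equivariant, and then a completeness argument that any linear $\iogroup$-equivariant $T$ is captured by some choice of parameters $a,b,c,d$. The second part I would derive using the orbit-based parameter-sharing characterization cited just before Eq.~\ref{eq:layer-simple} and developed in Appendix~\ref{sec:orbit-motivation}.

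For equivariance, I would fix $\sigma \in \iogroup$ and verify term-by-term that $\hyperio(\sigma W)^{(i)}_{jk}$ equals $\hyperio(W)^{(i)}_{\sigma_i^{-1}(j), \sigma_{i-1}^{-1}(k)}$. Each of the six terms in Eq.~\ref{eq:layer-simple} depends only on indices that permute correctly: the double sum $\wt{s}{\star,\star}$ is $\iogroup$-invariant (a total scalar), the column sum $\wt{i}{\star,k}$ depends only on the layer-$(i-1)$ index $k$, the row sum $\wt{i-1}{k,\star}$ also depends only on the layer-$(i-1)$ index $k$, the terms $\wt{i}{j,\star}$ and $\wt{i+1}{\star,j}$ depend only on the layer-$i$ index $j$, and $\wt{i}{jk}$ is the identity. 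Hence substituting $\sigma W$ for $W$ only relabels the free indices $j,k$ by $\sigma_i^{-1}(j), \sigma_{i-1}^{-1}(k)$, which is exactly equivariance. The bias terms in the full form Eq.~\ref{eq:layer-full} are handled identically.

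For completeness, any linear $T: \calU^{\ci} \to \calU^{\co}$ is given by a parameter tensor $\theta$, and the appendix shows that $\iogroup$-equivariance forces $\theta$ to be constant on the orbits of the $\iogroup$-action on (output-index, input-index) pairs. So the task reduces to enumerating those orbits---across weight--weight, weight--bias, bias--weight, and bias--bias pair types---and exhibiting exactly one free parameter per orbit in Eq.~\ref{eq:layer-full}. For weight--weight pairs the orbit invariants are the layer pair $(i,i')$ together with which neuron indices belonging to a common layer happen to coincide. Casework by $(i,i')$ gives: $|i-i'|\geq 2$ admits no coincidences, yielding the generic $a^{i,i'}$; $i=i'$ admits four coincidence patterns on $(j,j')$ and $(k,k')$, yielding $a^{i,i}, b^{i,i}, c^{i,i}, d^i$; and $i' = i\pm 1$ admits at most one coincidence (identifying the shared middle-layer index), contributing $b^{i,i-1}$ or $c^{i,i+1}$ alongside a generic $a^{i,i\pm 1}$. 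The bias-involving orbits supply the remaining parameters of Eq.~\ref{eq:layer-full}, and summing each orbit against its parameter reproduces the formula for $\hyperio$.

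The main obstacle is making the orbit enumeration airtight: in particular, checking that the only way two neuron indices can lie in the same $\iogroup$-orbit is for them to belong to the same layer (which follows because $\iogroup$ fixes layer labels), and that all coincidence patterns have been identified. The casework is mechanical once the group action from Eq.~\ref{eq:action} is written out, but including the bias orbits without double-counting or omission requires careful bookkeeping.
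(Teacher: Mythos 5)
Your proposal is correct and follows essentially the same route as the paper: a direct term-by-term verification of equivariance using the action in Eq.~\ref{eq:action}, followed by the orbit-based parameter-sharing argument of Appendix~\ref{sec:orbit-motivation} with the same casework over layer pairs $(i,s)$ and index-coincidence patterns that appears in Table~\ref{table:theta-constraint}. The only detail you gloss over is the final reparameterization (e.g., $b^{i,i-1} \gets b^{i,i-1} - a^{i,i-1}$) that converts the per-orbit indicator sums into the row/column-sum form of Eq.~\ref{eq:layer-simple}, which the paper carries out explicitly in Appendix~\ref{subsec:proof}.
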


\begin{proof}[Proof (sketch)] We can verify that $H$ satisfies the equivariance condition $\sqbra{\sigma H(W)}^{(i)}_{jk} = H(\sigma W)^{(i)}_{jk}$ for any $i,j,k$ by expanding each side of the equation using the definitions of the layer and action (Eq.~\ref{eq:action}). Moreover, Appendix~\ref{sec:orbit-motivation} shows that any $\iogroup$-equivariant linear map $T(\cdot,\theta)$ must have the same equivariant parameter sharing as $H$, meaning that it must be equivalent to $H$ for some choice of parameter $a,b,c,d$. See Appendix~\ref{appendix:equiv} for the full proof.
\end{proof}

Informally, the above proposition tells us that $H$ can express any linear $\iogroup$-equivariant function of a weight space. Since $\group$ is a subgroup of $\iogroup$, $\hyperio$ is also $\group$-equivariant. However, it does not express every possible linear $\group$-equivariant function. We derive the full $\group$-equivariant NF-Layer $\hyper:\calU \rightarrow \calU$ in Appendix~\ref{appendix:nonio-theory}. %

 Table~\ref{table:notation} summarizes the number of parameters (after parameter sharing) under different symmetry assumptions. While in general a linear layer $T(\cdot; \theta):\calU^{\ci}\rightarrow \calU^{\co}$ has $\ci\co\dim(\calU)^2$ parameters, the equivariant NF-Layers have significantly fewer free parameters due to parameter sharing. The $\iogroup$-equivariant layer $\hyperio$ has $O\paren{\ci\co L^2}$, while the $\group$-equivariant layer $\hyper$ has $O\paren{\ci\co(L + n_0 + n_L)^2}$ parameters. The latter's quadratic dependence on input and output dimensions can be prohibitive in some settings, such as in classification where the number of outputs can be tens of thousands.

\textbf{Extension to convolutional weight spaces.} 
In convolution layers, since neurons correspond to spatial \textit{channels}, we let $n_i$ denote the number of channels at the $i^\mathrm{th}$ layer. Each bias $\bsvec{i}\in\R^{n_i}$ has the same dimensions as in the fully connected case, so only the convolution filter needs to be treated differently since it has additional spatial dimension(s) that cannot be permuted. For example, consider a 1D CNN with filters $W = \Set{\wtmat{i}\in\R^{n_i \times n_{i - 1} \times w} | i\in\range{1..L}}$, where $n_i \times n_{i-1}$ are the output and input channel dimensions and $w$ is the filter width. We let $\wt{i}{jk} \coloneqq \wtmat{i}_{j,k,:} \in \R^{w}$ denote the $k^\mathrm{th}$ filter in the $j^\mathrm{th}$ output channel, then define the $\iogroup$-action the same way as in Eq.~\ref{eq:action}.

We immediately observe the similarities to multi-channel features: both add dimensions that are not permuted by the group action. In fact, suppose we have $c$-channel features $U \in \calU^c$ where $\calU$ is the weight-space of a 1D CNN. Then we combine the filter and channel dimensions of the weights, with $\wtmat{i} \in \R^{n_i \times n_{i-1} \times (cw)}$. This allows us to use the multi-channel NF-Layer $\hyperio:\calU^{w\ci}\rightarrow\calU^{w\co}$. Any further channel dimensions, such as those for 2D convolutions, can also be folded into the channel dimension.

It is common for CNNs in image classification to follow convolutional layers with pooling and fully connected (FC) layers, which opens the question of defining the $\iogroup$-action when layer $\ell$ is FC and layer $\ell-1$ is convolutional. If global spatial pooling removes all spatial dimensions from the output of $\ell -1$ (as in e.g., ResNets~\citep{he2015deep} and the Small CNN Zoo~\citep{unterthiner2020predicting}), then we can verify that the existing action definitions work without modification. We leave more complicated situations (e.g., when nontrivial spatial dimensions are flattened as input to FC layers) to future work.

\textbf{IO-encoding.} The $\iogroup$-equivariant layer $\hyperio$ is more parameter efficient than $\hyper$ (Table~\ref{table:notation}), but its NP assumptions are typically too strong. To resolve this problem, we can add either learned or fixed (sinusoidal) position embeddings to the columns of $\wtmat{1}$ and the rows of $\wtmat{L}$ and $\bsvec{L}$; this breaks the symmetry at input and output neurons even when using $\iogroup$-equivariant layers. In our experiments, we find that IO-encoding makes $\hyperio$ competitive or superior to $\hyper$, while using a fraction of the parameters.

\subsection{Invariant NF-Layers}
\label{sec:invariant}
Invariant neural functionals can be designed by composing multiple equivariant NF-Layers with an invariant NF-Layer, which can then be followed by an MLP. We define an $\iogroup$-invariant layer $\invio: \calU \rightarrow \R^{2L}$ by simply summing or averaging the weight matrices and bias vectors across any axis that has permutation symmetry, i.e., 
    $\invio(U) = \paren{\wt{1}{\star,\star},\cdots, \wt{L}{\star,\star}, \bs{1}{\star}, \cdots, \bs{L}{\star}}.$
We define the analogous $\group$-invariant layer $\inv$ in Eq.~\ref{eq:pooling-nonio} of the appendix.

\section{Experiments}
Our experiments evaluate permutation equivariant neural functionals on a variety of tasks that require either invariance (predicting CNN generalization and extracting information from INRs) or equivariance (predicting ``winning ticket'' sparsity masks and weight-space editing of INR content).

Throughout the experiments, we construct neural functional networks (\genours{}s) using the NF-Layers described in the previous section. Although the specific design varies depending on the task, we will broadly refer to our permutation equivariant NFNs as \ours{} and \fullours{}, depending on which NF-Layer variant they use (see Table~\ref{table:notation}). We also evaluate a ``pointwise'' ablation of our equivariant NF-Layer that ignores interactions between weights by only using the last term of Eq.~\ref{eq:layer-simple}, computing $\idx{H(W)}{i}{jk}\coloneqq\blue{d^i} \wt{i}{jk}$. We refer to NFNs that use this pointwise NF-Layer as \ptours.

Where feasible we also compare against neural functionals with standard FC layers, instead of equivariant NF-Layers. We optionally augment the training data with permutations (using Eq.~\ref{eq:action}) to encourage permutation symmetry. We refer to these methods as \mlp and \mlpaug.

\subsection{Predicting CNN generalization from weights}
\label{sec:pred_cnn_gen}
Why deep neural networks generalize despite being heavily overparameterized is a longstanding research problem in deep learning. One recent line of work has investigated the possibility of directly predicting the test accuracy of the models from the weights~\citep{unterthiner2020predicting,eilertsen2020classifying}. The goal is to study generalization in a data-driven fashion and ultimately identify useful patterns from the weights.

Prior methods develop various strategies for extracting potentially useful features from the weights before using them to predict the test accuracy~\citep{jiang2018predicting, yak2019towards, unterthiner2020predicting,jiang2021methods, martin2021implicit}. %
However, using hand-crafted features could fail to capture intricate correlations between the weights and test accuracy.
Instead, we explore using neural functionals to predict test accuracy from the \textit{raw weights} of feedforward convolutional neural networks (CNN) from the \textit{Small CNN Zoo} dataset~\citep{unterthiner2020predicting}, which contains thousands of CNN weights trained on several datasets with varied hyperparameters. We compare the predictive power of \fullours and \ours against a method of \citet{unterthiner2020predicting} that trains predictors on statistical features extracted from each weight and bias, and refer to it as \statnet. To measure the predictive performance of each method, we use \textit{Kendall's $\tau$}~\cite{kendall1938new}, a popular rank correlation metric with values in $[-1,1]$.

In Table~\ref{tab:pred_gen_zoo}, we show the results on two challenging subsets of Small CNN Zoo corresponding to CNNs trained on CIFAR-10-GS and SVHN-GS (GS stands for grayscaled). We see that \fullours consistently performs the best on both datasets by a significant margin, showing that having access to the full weights can increase predictive power over hand-designed features as in \statnet. Because the input and output dimensionalities are small on these datasets, \fullours only uses moderately more ($\sim 1.4\times$) parameters than \ours with equivalent depth and channel dimensions, while having significantly better performance.

\begin{table}[]
    \centering
    \caption{Test $\tau$ of generalization prediction methods on the Small CNN Zoo~\citep{unterthiner2020predicting}, which contains the weights and test accuracies of many small CNNs trained on different datasets, such as CIFAR-10-GS or SVHN-GS. \fullours outperforms other methods on both datasets. Uncertainties indicate max and min over two runs.}
    \label{tab:pred_gen_zoo}
    \begin{tabular}{crrrr}
      \toprule
       & \fullours & \ours & \statnet  \\
      \midrule
      CIFAR-10-GS & $\mathbf{0.934 \pm 0.001}$ & $0.922 \pm 0.001$ & $0.915 \pm 0.002$   \\
      SVHN-GS & $\mathbf{0.931 \pm 0.005}$  & $0.856 \pm 0.001$ & $0.843 \pm 0.000$    \\
      \bottomrule
    \end{tabular}
\end{table}

\subsection{Classifying implicit neural representations of images and 3D shapes}
\begin{table}
    \centering
    \caption{Classification train and test accuracies (\%) for implicit neural representations of MNIST, FashionMNIST, and CIFAR-10. Our equivariant NFNs outperform the MLP baselines, even when the MLP has permutation augmentations to encourage invariance. Uncertainties indicate standard error over three runs.}
        \begin{tabular}{crrrr}
      \toprule
       & \fullours & \ours & 
\mlp & \mlpaug \\
      \midrule
        CIFAR-10 & $44.1 \pm 0.471$ & $\mathbf{46.6 \pm 0.072}$ & $16.9 \pm 0.250$ & $18.9 \pm 0.432$ \\
        \midrule
        MNIST-10 & $92.5 \pm 0.071$ & $\mathbf{92.9 \pm 0.218}$ & $14.5 \pm 0.035$ & $21.0 \pm 0.172$ \\
        \midrule
        FashionMNIST & $72.7\pm1.53$ & $\mathbf{75.6\pm1.07}$ & $12.5\pm0.111$ & $15.9\pm0.181$ \\
      \bottomrule
    \end{tabular}
    \label{tab:inr-classification}
\end{table}

\begin{table}
    \centering
    \caption{Classification test accuracies (\%) for datasets of implicit neural representations (INRs) of either ShapeNet-10~\citep{shapenet2015} or ScanNet-10~\citep{dai2017scannet} Our equivariant NFNs outperform the MLP baselines and recent non-equivariant methods such as inr2vec \citep{2023inr2vec}. Uncertainties indicate standard error over three runs.}
    \scalebox{0.92}{
        \begin{tabular}{crrrrr}
      \toprule
       & \fullours & \ours & 
\mlp & \mlpaug & inr2vec\citep{2023inr2vec} \\
      \midrule
        ShapeNet-10 & $86.9\pm0.860$ & $\mathbf{88.7\pm0.461}$ & $25.4\pm0.121$ & $33.8\pm0.126$ & $39.1\pm 0.385$\\
        \midrule
        ScanNet-10 & $64.1\pm0.572$ & $\mathbf{65.9\pm1.10}$ & $ 32.9\pm0.351$ & $45.5\pm0.126 $ & $38.2\pm0.409$ \\
      \bottomrule
    \end{tabular}}
    \label{tab:inr_3d-classification}
\end{table}

\label{sec:classifying}
Given the rise of implicit neural representations (INRs) that encode data such as images and 3D-scenes~\citep{stanley2007compositional,mescheder2019occupancy,chen2019learning,park2019deepsdf,sitzmann2020implicit,mildenhall2020nerf,dupont2021generative,dupont2022data}, it is natural to wonder how to extract information about the original data directly from the weights.

In this task, our goal is to classify the contents of INRs given only the weights as input. We consider datasets of SIRENs~\citep{sitzmann2020implicit} that encode images (MNIST~\citep{lecun2010mnist}, FashionMNIST~\cite{xiao2017/online}, and CIFAR~\citep{krizhevsky2009learning}) and 3D shapes (ShapeNet-10 and ScanNet-10~\citep{qin2019pointdan}). For image datasets each SIREN network represents the mapping from pixel coordinate to RGB (or grayscale) value for a single image, while for 3D shapes each network is a signed (or unsigned) distance function encoding a single shape.
Each dataset of SIREN weights is split into training, validation, and testing sets.

We construct and train invariant neural functionals to classify the INRs, and compare their performance against the \mlp and \mlpaug baselines, which are three-layer MLPs with ReLU activations and 1,000 hidden units per layer. For the 3D-shape datasets we also report the performance of inr2vec~\citep{2023inr2vec}, a recent non-equivariant method with results on classifying 3D shapes from INR weights. Note that inr2vec's original setting assumes that all INRs in a dataset are trained from the same shared initialization, whereas our problem setting makes no such assumption and allows INRs to be trained from random and independent initializations.

The results in Table~\ref{tab:inr-classification} and Table~\ref{tab:inr_3d-classification} show that \fullours and \ours consistently achieve higher test accuracies than the baseline methods on both datasets. In addition to superior generalization, Tables~\ref{tab:inr-classification-full}-\ref{tab:inr_3d-classification-full} in the appendix show that NFNs are also usually better at fitting the training data (higher train accuracy). The MLPs struggle to even fit the training data, \textit{especially} under permutations augmentations, even with the same number of parameters as the NFNs.
Interestingly, \ours matches or exceeds \fullours performance on both CIFAR-10 and the 3D-shape datasets while using fewer parameters (e.g., $35\%$ as many parameters on CIFAR-10).

\subsection{Predicting ``winning ticket'' masks from initialization}

\begin{table}
    \centering
    \caption{Test accuracy (\%) of training with winning tickets (95\% sparsity masks) produced either by running IMP or predicted by an \genours{}. We also show the performance of Random ticket (random mask of equivalent sparsity level), and Dense training (no sparsity). We show results for MLPs (trained on MNIST) and CNNs (trained on CIFAR-10). Uncertainties show standard error over initializations.}
    \begin{tabular}{crr|rrr}
      \toprule
       & Dense & IMP & Random & \ours & \ptours  \\
      \midrule
      CIFAR-10 & $63.1\pm 0.06$ & $44.0\pm 0.06$ & $21.1\pm 0.26$ & $\mathbf{41.4\pm 0.08}$ & $\mathbf{42.6\pm 0.07}$  \\
      MNIST & $97.8\pm 0.0$ &  $96.2\pm 0.04$ & $89.6\pm 0.36$  & $\mathbf{94.8\pm 0.01}$ & $\mathbf{95.0\pm 0.01}$ \\
      \bottomrule
    \end{tabular}
    \label{tab:lth}
\end{table}

The Lottery Ticket Hypothesis~\cite[LTH]{frankle2018lottery,frankle2019stabilizing} conjectures the existence of \textit{winning tickets}, or sparse initializations that train to the same final performance as dense networks, and showed their existence in some settings through iterative magnitude pruning (IMP). IMP retroactively finds a winning ticket by pruning \textit{trained} models by magnitude; however, finding the winning ticket from only the initialization without training remains challenging.

We demonstrate that permutation equivariant neural functionals are a promising approach for finding winning tickets at initialization by learning over datasets of initializations and their winning tickets.
Let $U_0 \in \calU$ be an initialization and let the sparsity mask $M \in  \{0,1\}^{\text{dim}(\calU)}$ be a winning ticket for the initialization, with zeros indicating that the corresponding entries of $U_0$ should be pruned.
The goal is to predict a winning ticket $\hat{M}$ given a held out initialization $U_0$, such that the MLP initialized with $U_0$ and sparsity pattern $\hat{M}$ will achieve a high test accuracy after training.

We construct a conditional variational autoencoder~\citep[cVAE]{kingma2013auto,sohn2015learning} that learns a generative model of the winning tickets conditioned on initialization and train on datasets of (initialization, ticket) pairs found by one step of IMP with a sparsity level of $P_m=0.95$ for both MLPs trained on MNIST and CNNs trained on CIFAR-10. Table~\ref{tab:lth} compares the performance of tickets predicted by equivariant neural functionals against IMP tickets and random tickets. We generate random tickets by randomly sampling sparsity mask entries from $\text{Bernoulli}(1-P_m)$. In this setting, \fullours is prohibitively parameter inefficient, but \ours is able to recover test accuracies that are close to that of IMP pruned networks in CIFAR-10 and MNIST, respectively. Somewhat surprisingly, \ptours performs just as well as the other \genours{}s, indicating that one can approach IMP performance in these settings without considering interactions between weights or layers. Appendix~\ref{appendix:lth-analysis} further analyzes how \ptours learns to prune.

\subsection{Weight space style editing}
\begin{figure}
\begin{floatrow}
\ffigbox{%
  \includegraphics[width=0.40\textwidth]{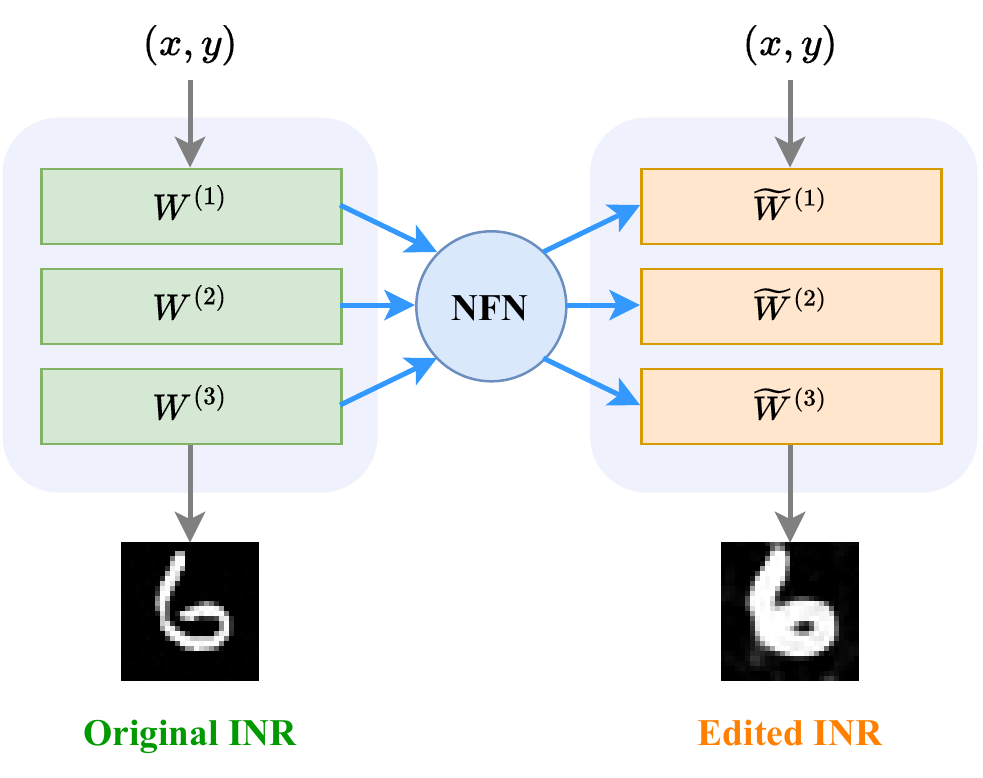}
}{
  \caption{In weight-space style editing, an \genours{} directly edits the weights of an INR to alter the content it encodes. In this example, the \genours{} edits the weights to dilate the encoded image.}
  \label{fig:style-editing}
}
\capbtabbox{%
  \begin{tabular}{lccc} \toprule
  Method & Contrast & Dilate \\
  & (CIFAR-10) & (MNIST) \\ \midrule
  \mlp & $0.031$ & $0.306$ \\
  \mlpaug & $0.029$ & $0.307$ \\
  \ptours & $0.029$ & $0.197$ \\
  \fullours & $\mathbf{0.021}$ & $\mathbf{0.070}$ \\
  \ours & $\mathbf{0.020}$ & $\mathbf{0.068}$ \\\bottomrule
  \end{tabular}
}{%
  \caption{Test mean squared error (lower is better) between weight-space editing methods and ground-truth image-space transformations.}%
  \label{tab:siren-editing-results}
}
\end{floatrow}
\end{figure}

\begin{figure}
    \centering
    \includegraphics[width=0.9\textwidth]{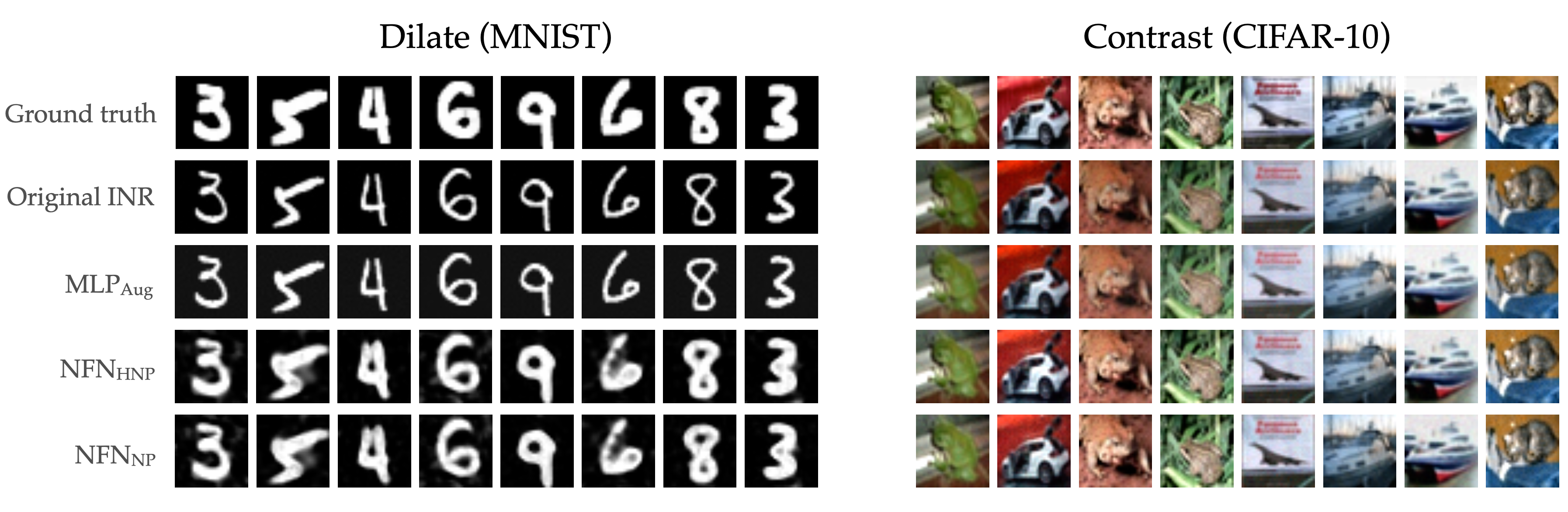}
    \caption{Random qualitative samples of INR editing behavior on the Dilate (MNIST) and Contrast (CIFAR-10) editing tasks. The first row shows the image produced by the original INR, while the rows below show the result of editing the INR weights with an \genours{}. The difference between \mlp neural functionals and equivariant neural functionals is especially pronounced on the more challenging Dilate tasks, which require modifying the geometry of the image. In the Contrast tasks, the \mlp baseline produces dimmer images compared to the ground truth, which is especially evident in the second and third columns.}
    \label{fig:style-editing-samples}
\end{figure}

Another potentially useful application of neural functionals is to edit (i.e., transform) the weights of a given INR to alter the content that it encodes. In particular, the goal of this task is to edit the weights of a trained SIREN to alter its encoded image (Figure~\ref{fig:style-editing}). We evaluate two editing tasks: (1) making MNIST digits thicker via image dilation (\textbf{Dilate}), and (2) increasing image contrast on CIFAR-10 (\textbf{Contrast}). Both of these tasks require neural functionals to process \textit{the relationships between different pixels} to successfully solve the task.

To produce training data for this task, we use standard image processing libraries~\citep[OpenCV]{itseez2015opencv} to dilate or increase the contrast of the MNIST and CIFAR-10 images, respectively. The training objective is to minimize the mean squared error between the image generated by the \genours{}-edited INR and the image produced by image processing.
We construct equivariant neural functionals to edit the INR weights, and compare them against MLP-based neural functionals with and without permutation augmentation.

Table~\ref{tab:siren-editing-results} shows that permutation equivariant neural functionals (\fullours and \ours) achieve significantly better test MSE when editing held out INRs compared to other methods, on both the Dilate (MNIST) and Contrast (CIFAR-10) tasks. In other words, they produce results that are closest to the ``ground truth'' image-space processing operations for each task. The pointwise ablation \ptours performs significantly worse, indicating that accounting for interactions between weights and layers is important to accomplishing these tasks. Figure~\ref{fig:style-editing-samples} shows random qualitative samples of editing by different methods below the original (pre-edit) INR. We observe that \genours{}s are more effective than \mlpaug at dilating MNIST digits and increasing the contrast in CIFAR-10 images.

\section{Related work}
The permutation symmetries of neurons have been a topic of interest in the context of loss landscapes and model merging~\citep{garipov2018loss, brea2019weight,tatro2020optimizing,entezari2021role,ainsworth2022git}.
Other works have analyzed the degree of learned permutation symmetry in networks that process weights~\citep{unterthiner2020predicting} and studied ways of accounting for symmetries when measuring or encouraging diversity in the weight space~\citep{deutsch2019generative}.
However, these symmetries have not been a key consideration in architecture design for processing weight space objects
~\citep{andrychowicz2016learning, li2016learning, ha2016hypernetworks,krueger2017bayesian,zhang2018graph, deutsch2019generative, knyazev2021parameter}.
Instead, existing approaches try to encourage permutation equivariance through data augmentation~\citep{peebles2022learning, metz2022velo}. In contrast, this work directly encodes the equivariance of the weight space into our architecture design, which can result in much higher data and computational efficiency, as evidenced by the success of convolutional neural networks~\citep{lecun1995convolutional}.

Our work follows a long line of literature that incorporates structure and symmetry into neural network architectures~\citep{lecun1995convolutional, cohen2016group,ravanbakhsh2017equivariance,kondor2018generalization,cohen2018spherical,finzi2021practical}, including works that design equivariant layers for various permutation symmetries~\citep{qi2016pointnet,zaheer2017deep,hartford2018deep,thiede2020general,maron2020learning}. Our key contribution is applying the framework of \citet{ravanbakhsh2017equivariance} to the particular neuron permutation symmetries found in the weights of deep neural networks~\citep{hecht1990algebraic}, leading to the characterization of our equivariant NF-Layers. As discussed in Section~\ref{sec:intro}, \citet{navon2023equivariant} recently developed an equivariant weight-space layer that is equivalent to our NF-Layer in the HNP setting. Our work introduces the NP setting to improve parameter efficiency and scalability over the HNP setting, and extends beyond the fully connected case to handle convolutional weight space inputs.

\section{Conclusion}
This paper proposes a novel symmetry-inspired framework for the design of neural functional networks (NFNs), which process weight-space features such as weights, gradients, and sparsity masks. Our framework focuses on the permutation symmetries that arise in weight spaces due to the particular structure of neural networks. We introduce two equivariant NF-Layers as building blocks for NFNs, which differ in their underlying symmetry assumptions and parameter efficiency, then use them to construct a variety of permutation equivariant neural functionals. Experimental results across diverse settings demonstrate that permutation equivariant neural functionals outperform prior methods and are effective for solving weight-space tasks.

\textbf{Limitations and future work.} Although we believe this framework is a step toward the principled design of effective neural functionals, there remain multiple directions for improvement. 
One such direction would involve reducing the activation sizes produced by NF-Layers, which could be useful to scaling neural functionals to process the weights of very large networks. 
Another such direction would concern extending the NF-Layers to process weight inputs of more complex architectures such as ResNet~\citep{he2015deep} and Transformer~\citep{vaswani2017attention} weights, which would enable larger-scale applications.

\bibliographystyle{abbrvnat}
\bibliography{references}

\clearpage

\appendix

\startcontents
\printcontents{}{-1}{\section*{Appendix}}

\section{Equivariant NF-Layer pseudocode}
\label{appendix:pseudocode}
Here we present a multi-channel implementation of the $\iogroup$-equivariant NF-Layer presented in Eq.~\ref{eq:layer-simple} (which ignores biases), using PyTorch~\citep{paszke2019pytorch} and Einops-like~\citep{rogozhnikov2022einops} pseudocode. That is, it implements a linear layer $\hyperio:\calW^{\ci}\rightarrow\calW^{\co}$, where $\ci$ and $\co$ are the number of input and output channels.

Note that our actual implementation differs from this pseudocode in a few ways: (1) it supports the full weight space $\calU$ which includes biases, (2) it supports convolution weights as well as fully connected weights, and (3) it initializes parameters based on the fan-in of the NF-Layer, instead of from $\mathcal{N}(0,1)$.

\renewcommand{\theFancyVerbLine}{
  \sffamily\textcolor[rgb]{0.5,0.5,0.5}{\scriptsize\arabic{FancyVerbLine}}}
\begin{minted}[mathescape,
               linenos,
               numbersep=5pt,
               gobble=2,
               frame=lines,
               framesep=2mm]{python}
  class NPLayer(nn.Module):
    def __init__(self, L, co, ci):
        super().__init__()
        # initialize weights. co=output channels, ci=input channels.
        self.A = nn.Parameter(torch.randn(L, L, co, ci))
        self.B = nn.Parameter(torch.randn(L, co, ci))
        self.B_prev = nn.Parameter(torch.randn(L, co, ci))
        self.C = nn.Parameter(torch.randn(L, co, ci))
        self.C_next = nn.Parameter(torch.randn(L, co, ci))
        self.D = nn.Parameter(torch.randn(L, co, ci))

    def forward(self, W):
        # Input W is a list of $L$ weight-space tensors with shapes:
        #     [$(B, \ci, n_1, n_0), \ldots, ((B, \ci, n_{L}, n_{L-1}))$]
        # We return a list of $L$ tensors with shapes:
        #     [$(B, \co, n_1, n_0), \ldots, ((B, \co, n_{L}, n_{L-1}))$]
        # where $\ci,\co$ are the input and output channels.
        
        # Compute $\sqbra{\wtmat{1}_{\star,:},\cdots,\wtmat{L}_{\star,:}}$. Each has shape $(B, \ci, n_{i-1})$.
        row_means = [w.mean(-2) for w in W] 
        # Compute $\sqbra{\wtmat{1}_{:,\star},\cdots,\wtmat{L}_{:,\star}}$. Each has shape $(B, \ci, n_i)$.
        col_means = [w.mean(-1) for w in W] 
        # Compute $\sqbra{\wtmat{1}_{\star,\star},\cdots,\wtmat{L}_{\star,\star}}$ with shape $(B,\ci,L)$.
        rowcol_means = torch.stack([w.mean(dim=(-2, -1)) for w in W], -1)
        out_W = []
        for i, (w, a, b, b_prev, c, c_next, d) in enumerate(zip(
            W, self.A, self.B, self.B_prev, self.C, self.C_next, self.D
        )):
            # Calculates $\sum_s a^{i,s} \wt{s}{\star,\star}$.
            h1 = einsum(a, rowcol_means, 'co ci s, B ci s -> B co () ()')
            # Calculates $b^{i,i} \wt{i}{\star,k}$.
            h2 = einsum(b, row_means[i], 'co ci, b ci n_im1 -> B co () n_im1')
            if i > 0:
                # Calculates $b^{i,i-1} \wt{i-1}{k,\star}$.
                h2 += einsum(b_prev, col_means[i-1], 'co ci, B ci n_im1 -> B co () n_im1')
            # Calculates $c^{i,i}\wt{i}{j,\star}$.
            h3 = einsum(c, col_means[i], 'co ci, B ci n_i -> b co n_i ()')
            if i < self.L - 1:
                # Calculates $c^{i,i+1}\wt{i+1}{\star,j}$.
                h3 += einsum(c_next, row_col[i+1], 'co ci, B ci n_i -> B co n_i ()')
            # Calculates $d^{i} \wt{i}{jk}$.
            h4 = einsum(d, w, 'co ci, B ci n_i n_im1 -> B co n_i n_im1')
            # Calculates $H(W^{(i)})$ with shape $(B, c_o, n_i, n_{i-1})$.
            out_W.append(h1 + h2 + h3 + h4)
        return out_W
\end{minted}

\section{$\iogroup$-equivariant NF-Layer}
\label{appendix:equiv}
This section presents the construction of the $\iogroup$-equivariant layer $\hyperio:\calU\rightarrow \calU$ in detail. First, Sec.~\ref{subsec:NF-Layer-defn} gives the full definition of $\hyperio$ as a function of both weight-space and bias-space features. Sec.~\ref{subsec:general-NF-Layers} introduces general linear layers on $\calU$ parameterized by $\theta$. Sec.~\ref{sec:orbit-motivation} shows that \textbf{any} such linear layer must satisfy a certain parameter sharing to achieve equivariance. Sec.~\ref{subsec:parameter-sharing} derives this $\iogroup$-equivariant parameter sharing on $\theta$. Finally, Sec.~\ref{subsec:proof} shows how the parameter sharing reduces the general linear layers into our equivariant NF-Layer definition.

\subsection{Full definition}
\label{subsec:NF-Layer-defn}
We present the full definition of the $\iogroup$-equivariant NF-Layer $\hyperio:\calU \rightarrow \calU$, which completes Eq.~\ref{eq:layer-simple} by also including the biases. The layer processes a set of weights $U=(W,v)$ and outputs arrays $(Y, z)$ with the same dimensions. The learnable parameters are in \blue{blue}:
\begin{align}
    \label{eq:layer-full}
    \hyperio(U) \coloneqq &(Y, z) \in \calW \times \calV \\ \nonumber
    Y^{(i)}_{jk} \coloneqq  &\sum_s \paren{\blue{\aaa{i,s}{\vartheta}} \wt{s}{\star,\star} + \blue{\aaa{i,s}{\phi}}\bs{s}{\star}}
    + \blue{\bb{i,i}{\vartheta}} \wt{i}{\star,k} + \blue{\bb{i,i-1}{\vartheta}} \wt{i-1}{k,\star}
    \\ \nonumber
    &+ \blue{\bb{i}{\phi}}\bs{i}{j} + \blue{\cc{i,i}{\vartheta}} \wt{i}{j,\star} + \blue{\cc{i,i+1}{\vartheta}} \wt{i+1}{\star,j}
    + \blue{\cc{i}{\phi}}\bs{i-1}{k} + \blue{\dd{i}{\vartheta}} \wt{i}{jk}  \\ \nonumber
    z^{(i)}_j \coloneqq &\sum_s \paren{\blue{\aaa{i,s}{\varphi}}\wt{s}{\star,\star} + \blue{\aaa{i,s}{\psi}}\bs{s}{\star}} + \blue{\bb{i,i}{\varphi}}\wt{i}{j,\star} + \blue{\bb{i,i+1}{\varphi}}\wt{i+1}{\star,j} + \blue{\bb{i}{\psi}}\bs{i}{j},
\end{align}
where $\star$ denotes summation or averaging over a dimension.

\subsection{General NF-Layers}
\label{subsec:general-NF-Layers}
\begin{table}
\centering
\begin{tabular}{c | cccc}
  & $s=i-1$ & $s=i$ & $s=i+1$ & other $s$ \\
  \hline
  $\idx{\vartheta}{ijk}{spq}$ &
  $\begin{cases}
  \aaa{i,i-1}{\vartheta} & k \neq p \\
  \bb{i,i-1}{\vartheta} & k=p
  \end{cases}$
  & $\begin{cases}
  \aaa{i,i}{\vartheta} & j\neq p, k\neq q \\
  \bb{i,i}{\vartheta} & j=p, k\neq q \\
  \cc{i,i}{\vartheta} & j\neq p, k=q \\
  \dd{i}{\vartheta} & j=p, k=q
  \end{cases}$
  & $\begin{cases}
  \aaa{i,i+1}{\vartheta} & j \neq q\\
  \cc{i,i+1}{\vartheta} & j = q
  \end{cases}$
  & $ \aaa{i,s}{\vartheta}$ \\
\end{tabular}\vspace{0.4cm}
\begin{tabular}{c | ccc}
    & $s=i-1$ & $s=i$ & other $s$ \\
    \hline
    $\idx{\phi}{ijk}{sp}$
    & $ \case{\aaa{i,i-1}{\phi} & k \neq p \\ \bb{i,i-1}{\phi} & k = p}$
    & $ \case{\aaa{i,i}{\phi} & j \neq p \\ \bb{i,i}{\phi} & j = p}$
    & $ \aaa{i,s}{\phi}$
\end{tabular}\vspace{0.4cm}
\begin{tabular}{c | ccc}
    & $s=i$ & $s=i+1$ & other $s$ \\
    \hline
    $\idx{\varphi}{ij}{spq}$
    & $ \case{\aaa{i,i-1}{\varphi} & j \neq p \\ \bb{i,i-1}{\varphi} & j = p}$
    & $ \case{\aaa{i,i}{\varphi} & j \neq q \\ \bb{i,i}{\varphi} & j = q}$
    & $ \aaa{i,s}{\varphi}$
\end{tabular}\\
\vspace{0.4cm}
\begin{tabular}{c | cc}
    & $s=i$ & other $s$ \\
    \hline
    $\idx{\psi}{ij}{sp}$
    & $ \case{\aaa{i,i}{\psi} & j \neq p \\ \bb{i}{\psi} & j = p}$
    & $ \aaa{i,s}{\psi}$
\end{tabular}
\caption{$\iogroup$-equivariant parameter sharing for linear maps $\ve(U) \mapsto \theta \ve(U)$. Parameter sharing is a system of constraints on the entries of $(\vartheta, \phi, \varphi, \psi) = \theta$. Each table is organized by the layer indices $(i,s)$. For example, the first table says that for any $(i,s)$ where $s=i-1$, we constrain $\idx{\vartheta}{ijk}{spq} = \idx{\vartheta}{i,j',k'}{s,p',q'} = \aaa{i,i-1}{\vartheta}$ for any $j,k,p,q$ and $j',k',p',q'$ where $k\neq p$ and $k' \neq p'$. After parameter sharing, we observe that there are only a constant number of free parameters for each $(i,s)$ pair, adding up to $O\paren{L^2}$ parameters total.}
\label{table:theta-constraint}
\end{table}
To arrive at Eq.~\ref{eq:layer-full}, we begin by considering linear NF-Layers $T(\cdot; \theta):\calU \rightarrow \calU$ parameterized by $\theta \in \Theta$. If we flatten the input $U=(W,v)$ into a vector $\ve(U)\in\R^{\dim(\calU)}$, then the NF-Layer would be a matrix-vector product $T(\cdot,\theta): \ve(U) \mapsto \theta \ve(U)$ for square matrix $\theta\in\R^{\dim(\calU)\times\dim(\calU)}$.

For our purposes, it is sometimes convenient to distinguish layer, row, and column indices of entries in $U$ without any flattening, so we split the parameters $\theta=(\vartheta,\phi,\varphi,\psi)$ and write $T:\calU\rightarrow\calU$ in the form:
\begin{align}
    \label{eq:NF-Layer}
    T(U; \theta) &\coloneqq (Y(U), z(U)) \quad \in \calW \times \cal V=\calU\\
    Y(U)^{(i)}_{jk} &\coloneqq \sum_{s=1}^L\sum_{p=1}^{n_s}\sum_{q=1}^{n_{s-1}}\idx{\vartheta}{ijk}{spq} \wt{s}{pq} + \sum_{s=1}^L\sum_{p=1}^{n_s}\idx{\phi}{ijk}{sp} \bs{s}{p}\\
    z(U)^{(i)}_j &\coloneqq \sum_{s=1}^L\sum_{p=1}^{n_s}\sum_{q=1}^{n_{s-1}} \idx{\varphi}{ij}{spq} \wt{s}{pq} + \sum_{s=1}^L\sum_{p=1}^{n_s}\idx{\psi}{ij}{sp}\bs{s}{p}.
\end{align}
Since we can equivalently flatten this operation into the matrix-vector product $\ve(U) \mapsto \theta \ve(U)$, we introduce the notation $U_\alpha$ to identify individual entries of $U$. Here $\alpha$ is a tuple of length two or three, for indexing into either a weight or bias. We denote the space of valid index tuples of $\calW$ and $\calV$ by $\WW$ and $\VV$, respectively, and define $\UU \coloneqq \WW \cup \VV$ as the combined index space of $\calU$. For example, if $\alpha=(i,j,k) \in \WW$, then $U_\alpha = \wt{i}{jk}$.

We can then define the index space $\II \coloneqq \UU \times \UU$ for parameters $\theta \in \Theta$. We use $\theta^\alpha_\beta$ to index an entry of $\theta$ with upper and lower indices $\sqbra{\alpha, \beta} \in \II$ . For example, if $\alpha=(i,j,k)\in \WW$ and $\beta=(s,p,q)\in \WW$, we have $\theta^{\alpha}_{\beta} = \idx{\vartheta}{ijk}{spq}$.

The indices $\alpha$ and $\beta$ correspond to rows and columns of the matrix $\theta$, respectively. Eq.~\ref{eq:NF-Layer} can be rewritten in the flattened form:
\begin{equation}
    \label{eq:NF-Layer-flattened}
    T(U;\theta)_\alpha = \sum_\beta \theta^\alpha_\beta U_\beta.
\end{equation}
Finally, we can re-express the action of $\iogroup$ on $\calU$ (Eq.~\ref{eq:action}) as an action on the index space $\UU$:
\begin{align}
    \label{eq:vartheta-action}
    \sigma (i,j,k) &= (i, \sigma_i(j), \sigma_{i-1}(k)) \quad (i,j,k) \in \WW\\
    \sigma (i,j) &= (i, \sigma_i(j)) \quad (i,j) \in \VV,
\end{align}
for any $\sigma \in \iogroup$. We extend this definition into an action of $\iogroup$ on $\II$:
\begin{equation}
\label{eq:theta-action}
\sigma\sqbra{\alpha, \beta} \coloneqq [\sigma \alpha, \sigma \beta], \quad \sqbra{\alpha,\beta} \in \UU\times\UU.
\end{equation}

\subsection{Equivariance and parameter sharing}
\label{sec:orbit-motivation}
We would like to find the constraints on $\theta$ that make the linear map $T(\cdot;\theta) : \ve(U) \mapsto \theta \ve(U)$ equivariant to $\iogroup$.

We can represent the action of $\sigma \in \iogroup$ on $\ve(U)$ by a matrix $P_\sigma \in \Set{0,1}^{\dim(U)\times\dim(U)}$. Equivariance requires that $P_\sigma\theta \ve(U) = \theta P_\sigma \ve(U)$ for any $\sigma \in \iogroup$. Since the input $U$ can be anything, we get the following constraint on $\theta$:
\begin{equation}
    \label{eq:matrix-constraint}
    P_\sigma \theta = \theta P_\sigma, \quad \forall \sigma \in \iogroup.
\end{equation}

When written out using indices $\alpha,\beta$, the constraint requires that for any $\sigma \in \iogroup$:
\begin{equation}
    \sqbra{P_\sigma \theta}^\alpha_\beta = \theta^{\sigma^{-1}(\alpha)}_{\beta}
    = \theta^{\alpha}_{\sigma(\beta)} = \sqbra{\theta P_\sigma}^\alpha_\beta.
\end{equation}
By relabeling $\alpha \gets \sigma^{-1}(\alpha)$, we can rewrite this condition $\theta^{\alpha}_\beta = \theta^{\sigma(\alpha)}_{\sigma(\beta)}$.
Hence for any linear $\iogroup$-equivariant map $T(\cdot,\theta):\calU \rightarrow \calU$, $\theta$ must share parameters within orbits under the action of $\iogroup$ on its indices $\alpha,\beta$ (Eq.~\ref{eq:theta-action}). In fact, this strategy was first proposed as a way of constructing equivariant layers by \citet[Prop 3.1]{ravanbakhsh2017equivariance}.

\subsection{$\iogroup$-equivariant parameter sharing}
\label{subsec:parameter-sharing}
We now derive the required parameter sharing conditions on $\theta$ to make $T(\cdot; \theta):\calU \rightarrow \calU$ equivariant to $\iogroup$. Our approach is to partition the parameters of $\theta$ into orbits under the $\iogroup$-action on its index space (Eq.~\ref{eq:theta-action}), and share parameters within an orbit.

The index space of $\theta$ is $\II = \UU \times \UU$. There are four subsets of $\II$:
\begin{enumerate}
    \item $\II^{WW}\coloneqq\WW \times \WW$: Contains $\sqbra{\alpha,\beta} = \sqbra{(i,j,k),(s,p,q)}$, indexing parameters $\idx{\vartheta}{ijk}{spq}$.
    \item $\II^{WV}\coloneqq\WW \times \VV$: Contains $\sqbra{\alpha,\beta} = \sqbra{(i,j,k),(s,p)}$, indexing parameters $\idx{\phi}{ijk}{sp}$.
    \item $\II^{VW}\coloneqq\VV \times \WW$: Contains $\sqbra{\alpha,\beta} = \sqbra{(i,j),(s,p,q)}$, indexing parameters $\idx{\varphi}{ij}{spq}$.
    \item $\II^{VV}\coloneqq\VV \times \VV$: Contains $\sqbra{\alpha,\beta} = \sqbra{(i,j),(s,p)}$, indexing parameters $\idx{\psi}{ij}{sp}$.
\end{enumerate}
Equivariant parameter sharing then amounts to partitioning $\II$ into orbits under $\iogroup$, and then sharing the corresponding parameters within each orbit.

Consider the block of indices $\II^{WW}=\WW \times \WW$, containing $\sqbra{\alpha, \beta} = \sqbra{(i,j,k),(s,p,q)}$ indexing parameters $\idx{\vartheta}{\alpha}{\beta}$. Since the $\iogroup$-action never changes the layer indices $(i,s)$, we can independently consider orbits within sub-blocks of indices
$\II^{WW}_{i,s} = \Set{\sqbra{(i,j,k),(s,p,q)} | \forall j,k,p,q}$.
The number of orbits within each sub-block $\II^{WW}_{i,s}$ depends on the relationship between the layer indices $i$ and $s$: they are either the same layer ($s=i$), they are adjacent ($s=i-1$ or $s=i+1$), or they are non-adjacent ($s\notin \{i-1,i,i+1\}$). We now analyze the orbits of sub-blocks for a few cases.

If $s=i-1$, then choose any two indices $\sqbra{\alpha^{(1)},\beta^{(1)}},\sqbra{\alpha^{(2)},\beta^{(2)}}\in\II^{WW}_{i,s}$ where the first satisfies $p\neq k$ and the second satisfies $p = k$. Then the orbits of each index are:
\begin{align}
    \orb\paren{\sqbra{\alpha^{(1)},\beta^{(1)}}}=\Set{[(i,j,k),(s,p,q)] \mid \forall j,k,p,q : p \neq k}\\
    \orb\paren{\sqbra{\alpha^{(2)},\beta^{(2)}}} = \Set{[(i,j,k),(s,p,q)] \mid \forall j,k,p,q : p = k}.
\end{align}
We see that these two orbits actually partition the entire sub-block of indices $\II^{WW}_{i,s}$, with each orbit characterized by whether or not $p=k$. We introduce the parameters $\aaa{i,i-1}{\vartheta}$ (for the first orbit) and $\bb{i,i-1}{\vartheta}$ (for the second orbit). Under equivariant parameter sharing, all parameters of $\vartheta$ corresponding $\II^{WW}_{i,s}$ are equal to either $\aaa{i,i-1}{\vartheta}$ or $\bb{i,i-1}{\vartheta}$, depending on whether $p=k$ or $p\neq k$.

If $s=i+1$, we instead choose any two indices where the first satisfies $j \neq q$ and the second satisfies $j = q$. Then the sub-block of indices $\II^{WW}_{i,s}$ is again partitioned into two orbits:
\begin{equation}
    \Set{[(i,j,k),(s,p,q)] | \forall j,k,p,q : j \neq q}, \text{ and } \Set{[(i,j,k),(s,p,q)] | \forall j,k,p,q : j = q}
\end{equation}
depending on the condition $j=q$. We name two parameters $\aaa{i,i+1}{\vartheta}$ and $\cc{i,i+1}{\vartheta}$ for this sub-block, with one for each orbit.

We can repeat this process for sub-blocks of $\II^{WW}$ where $i=s$ and $s\notin \{i-1,i,i+1\}$, as well as for the other three blocks of $\II$.
Table~\ref{table:theta-constraint} shows the complete parameter sharing constraints on $\theta$ resulting from partitioning all possible sub-blocks into orbits.

\textbf{Number of parameters.} We also note that every layer pair $(i,s)$ introduces only a constant number of parameters: the number of parameters in each cell of Table~\ref{table:theta-constraint} has no dependence on the input, output, or hidden dimensions of $\calU$. Hence the number of distinct parameters after parameter sharing simply grows with the number of layer pairs, i.e. $O\paren{L^2}$. 

\subsection{Equivalence to equivariant NF-Layer definition}
\label{subsec:proof}
All that remains is to show that the map $T(\cdot; \theta): \calU \rightarrow \calU$ with $\iogroup$-equivariant parameter sharing (Table~\ref{table:theta-constraint}) is equivalent to the NF-Layer $\hyperio$ we defined in Eq.~\ref{eq:layer-full}.

Consider a single term from Eq.~\ref{eq:NF-Layer} where $s=i-1$. Substituting using the constraints of Table~\ref{table:theta-constraint}, we simplify:
\begin{align}
    \sum_{p,q}\vartheta^{i,j,k}_{i-1,p,q}W^{(i-1)}_{pq} &= a^{i,i-1} \sum_q \sum_{k\neq p} W^{(i-1)}_{p,q} + b^{i,i-1}\sum_q W^{(i-1)}_{k,q}\\ \nonumber
    &= a^{i,i-1} \wt{i-1}{\star,\star} + (b^{i,i-1} - a^{i,i-1}) \wt{i-1}{k,\star}.
\end{align}
We can then reparameterize $b^{i,i-1} \gets b^{i,i-1} - a^{i,i-1}$, resulting in two terms that appear in Eq.~\ref{eq:layer-full}. We can simplify every term of Eq.~\ref{eq:NF-Layer} in a similar manner using the parameter sharing of Table~\ref{table:theta-constraint}, reducing the general layer to the $\iogroup$-equivariant NF-Layer.

\section{NF-Layers for the HNP setting}
\label{appendix:nonio-theory}
\subsection{Equivariant NF-Layer}
Because an expression for the $\group$-equivariant NF-Layer analogous to Eq.~\ref{eq:layer-full} would be unwieldy, we instead define the layer in terms of its parameter sharing (Tables \ref{table:hnp-vartheta}-\ref{table:hnp-psi}) on $\theta$.

We can derive HNP-equivariant parameter sharing of $\theta$ using a similar strategy to Sec.~\ref{subsec:parameter-sharing}: we partition the index spaces $\II^{WW},\II^{WV},\II^{VW},\II^{VV}$ into orbits under the action of $\group$, and share parameters within each corresponding orbit of $\vartheta,\phi,\varphi,\psi$. 
The resulting parameter sharing is different from the NP-setting because while the action of $\iogroup$ on $\calU$ could permute the rows and columns of every weight and bias, the action of $\group$ on $\calU$ does not affect the columns of $\wtmat{1}$ or the rows of $\wtmat{L},\bsvec{L}$, which correspond to input and output dimensions (respectively).

The orbits are again analyzed within sub-blocks defined by the values of the layer indices $(i,s)$. As with the NP setting, there are broadly four types of sub-blocks based on whether $i=s$, $s=i-1$, $s=i+1$, or $s\notin \{i-1,i,i+1\}$. However, there are now additional considerations based on whether $i$ or $s$ is an input or output layer. For example, consider the sub-block of $\II^{WW}$ where $i=s=1$, which we denote $\II^{WW}_{1,1}$. The action on the indices in this sub-block can be written $\sigma \sqbra{\alpha,\beta} = \sqbra{(1,\sigma_1(j),k),(1,\sigma_1(p),q)}$. Importantly, the column indices $k,q$ are never permuted since they correspond to the input layer. We see that $\II^{WW}_{1,1}$ contains two orbits \textit{for each} $k\in\range{1..n_0}$ and $q\in\range{1..n_0}$, with the two orbits characterized by whether or not $j=p$. Hence we have $2n_0^2$ orbits and Table~\ref{table:hnp-vartheta} introduces $2n_0^2$ parameters $\Set{\aaa{1,1,k,q}{\vartheta},\bb{1,1,k,q}{\vartheta} | k,q \in \range{1..n_0}}$ for this sub-block of parameters.

Now consider another sub-block of $\II^{WW}$ where $1 < i=s < L$. Now the action of $\group$ on indices in this sub-block can be written $\sigma \sqbra{\alpha, \beta} = \sqbra{(i,\sigma_i(j),\sigma_{i-1}(k)), (i, \sigma_i(p), \sigma_{i-1}(q))}$. Then we have a total of two orbits characterized by whether or not $k=p$, rather than $2n_0^2$ orbits for the $i=1$ case.
Tables~\ref{table:hnp-vartheta}-\ref{table:hnp-psi} present the complete parameter sharing for each of $\vartheta,\phi,\varphi,\psi$, resulting from analyzing every possible orbit within any sub-block of $\II^{WW},\II^{WV},\II^{VW},\II^{VV}$.

\begingroup
\renewcommand{\arraystretch}{1.5}
\begin{table}
\begin{tabular}{c !{\vrule width 1.5pt} c|c|c}
  \multicolumn{4}{c}{$\idx{\vartheta}{ijk}{spq}$} \\
  \hhline{-|---}
  \hhline{-|---}
  \hhline{-|---}
  \multirow{3}{*}{$s=i-1$} & $i=2$ & $2<i<L$ & $i=L$ \\
  \cline{2-4}
    & $\begin{cases}
        \aaa{2,1,q}{\vartheta} & k \neq p \\
        \bb{2,1,q}{\vartheta}  & k = p
    \end{cases}$
    & $\begin{cases}
        \aaa{i,i-1}{\vartheta} & k\neq p \\
        \bb{i,i-1}{\vartheta} & k = p \\
    \end{cases}$
    & $\begin{cases}
        \aaa{L,L-1,j}{\vartheta} & k \neq p \\
        \bb{L,L-1,j}{\vartheta} & k = p
    \end{cases}$\\
  \hhline{-|---}
  \hhline{-|---}
  \hhline{-|---}
  \multirow{5}{*}{$s=i$} & $i=1$ & $1<i<L$ & $i=L$ \\
  \cline{2-4}
    &$\begin{cases}
        \aaa{1,1,k,q}{\vartheta} & j \neq p\\
        \bb{1,1,k,q}{\vartheta} & j = p\\
    \end{cases}$
    & $\begin{cases}
        \aaa{i,i}{\vartheta} & j\neq p, k\neq q\\
        \bb{i,i}{\vartheta} & j=p, k \neq q\\
        \cc{i,i}{\vartheta} & j\neq p, k=q \\
        \dd{i,i}{\vartheta} & j=p,k=q\\
    \end{cases}$
    & $ \begin{cases}
        \aaa{L,L,j,p}{\vartheta} & k \neq q \\
        \cc{L,L,j,p}{\vartheta} & k = q
    \end{cases}$ \\
  \hhline{-|---}
  \hhline{-|---}
  \hhline{-|---}
  \multirow{3}{*}{$s=i+1$} & $i=1$ & $1<i<L-1$ & $i=L-1$ \\
  \cline{2-4}
    & $ \begin{cases}
        \aaa{1,2,k}{\vartheta} & j \neq q\\
        \cc{1,2,k}{\vartheta} & j = q\\
    \end{cases}$
    & $ \begin{cases}
        \aaa{1,2}{\vartheta} & j \neq q\\
        \cc{1,2}{\vartheta} & j = q\\
    \end{cases}$
    & $ \begin{cases}
        \aaa{L-1,L,p}{\vartheta} & j \neq q\\
        \cc{L-1,L,p}{\vartheta} & j = q\\
    \end{cases}$ \\
  \hhline{-|---}
  \hhline{-|---}
  \hhline{-|---}
  \multirow{6}{*}{other $s$} & $i=1,1<s<L$ & $i=1,s=L$ & $1<i<L,s=L$ \\
  \cline{2-4}
    & $\aaa{1,s,k}{\vartheta}$
    & $\aaa{1,L,k,p}{\vartheta}$
    & $\aaa{i,L,p}{\vartheta}$ \\
  \hhline{~|---}
  \hhline{~|---}
  & $1<i<L,s=1$ & $i=L,s=1$ & $i=L,1<s<L$ \\
  \cline{2-4}
    & $ \aaa{i,1,q}{\vartheta}$
    & $ \aaa{L,1,j,q}{\vartheta}$
    & $ \aaa{L,s,j}{\vartheta}$ \\
  \hhline{~|---}
  \hhline{~|---}
  & $1<i<L,1 < s < L$ & & \\
  \cline{2-2}
  & $\aaa{i,s}{\vartheta}$ & & \\
\end{tabular}
\caption{HNP-equivariant parameter sharing on $\vartheta\subseteq \theta$, corresponding to the NF-Layer $\hyper:\calU\rightarrow\calU$.}
\label{table:hnp-vartheta}
\end{table}

\begin{table}
\begin{tabular}{c !{\vrule width 1.5pt} c|c|c}
  \multicolumn{4}{c}{$\idx{\phi}{ijk}{sp}$} \\
  \hhline{-|---}
  \hhline{-|---}
  \hhline{-|---}
  \multirow{3}{*}{$s=i-1$} & & $1 < i < L$ & $i=L$ \\
  \cline{3-4}
    & & $\begin{cases}
        \aaa{i,i-1}{\phi} & k\neq p \\
        \bb{i,i-1}{\phi} & k = p \\
    \end{cases}$
    & $\begin{cases}
        \aaa{L,L-1,j}{\phi} & k \neq p \\
        \bb{L,L-1,j}{\phi} & k = p
    \end{cases}$\\
  \hhline{-|---}
  \hhline{-|---}
  \hhline{-|---}
  \multirow{3}{*}{$s=i$} & $i=1$ & $1<i<L$ & $i=L$ \\
  \cline{2-4}
    &$\begin{cases}
        \aaa{1,1,k}{\phi} & j \neq p\\
        \bb{1,1,k}{\phi} & j = p\\
    \end{cases}$
    & $\begin{cases}
        \aaa{i,i}{\phi} & j\neq p \\
        \bb{i,i}{\phi} & j=p \\
    \end{cases}$
    & $ \bb{L,L,j,p}{\phi}$ \\
  \hhline{-|---}
  \hhline{-|---}
  \hhline{-|---}
  \multirow{4}{*}{other $s$} & $i=1,1<s<L$ & $i=1,s=L$ & $1<i<L,s=L$ \\
  \cline{2-4}
    & $\aaa{1,s,k}{\phi}$
    & $\aaa{1,L,k,p}{\phi}$
    & $\aaa{i,L,p}{\phi}$ \\
  \hhline{~|---}
  \hhline{~|---}
  & $1<i<L,1 \leq s < L$ & $i=L,1 \leq s < L$ & \\
  \cline{2-3}
    & $ \aaa{i,s}{\phi}$
    & $ \aaa{L,s,j}{\phi}$ & \\
\end{tabular}
\caption{HNP-equivariant parameter sharing on $\phi \subset \theta$, corresponding to the NF-Layer $\hyper:\calU\rightarrow\calU$.}
\label{table:hnp-phi}
\end{table}

\begin{table}
\begin{tabular}{c !{\vrule width 1.5pt} c|c|c}
  \multicolumn{4}{c}{$\idx{\varphi}{ij}{spq}$} \\
  \hhline{-|---}
  \hhline{-|---}
  \hhline{-|---}
  \multirow{3}{*}{$s=i$} & $i=1$ & $1<i<L$ & $i=L$ \\
  \cline{2-4}
    &$\begin{cases}
        \aaa{1,1,q}{\varphi} & j \neq p\\
        \bb{1,1,k}{\varphi} & j = p\\
    \end{cases}$
    & $\begin{cases}
        \aaa{i,i}{\varphi} & j\neq p \\
        \bb{i,i}{\varphi} & j=p \\
    \end{cases}$
    & $ \bb{L,L,j,p}{\varphi}$ \\
  \hhline{-|---}
  \hhline{-|---}
  \hhline{-|---}
  \multirow{3}{*}{$s=i+1$} & & $1\leq i < L-1$ & $i=L-1$ \\
  \cline{3-4}
    & & $ \begin{cases}
        \aaa{i,i+1}{\varphi} & j \neq p \\
        \bb{i,i+1}{\varphi} & j = p
    \end{cases}$
    & $ \begin{cases}
        \aaa{L-1,L,p}{\varphi} & j \neq p \\
        \bb{L-1,L,p}{\varphi} & j = p
    \end{cases}$ \\
  \hhline{-|---}
  \hhline{-|---}
  \hhline{-|---}
  \multirow{4}{*}{other $s$} & $1\leq i < L,s=1$ & $1\leq i < L, 1 < s < L$ & $1\leq i < L,s=L$ \\
  \cline{2-4}
    & $\aaa{i,s,q}{\varphi}$
    & $\aaa{i,s}{\varphi}$
    & $\aaa{i,L,p}{\varphi}$ \\
  \hhline{~|---}
  \hhline{~|---}
  & $i=L,s=1$ & $i=L,1 < s < L$ & \\
  \cline{2-3}
    & $ \aaa{L,1,j,q}{\varphi}$
    & $ \aaa{L,s,j}{\varphi}$ & \\
\end{tabular}
\caption{HNP-equivariant parameter sharing on $\varphi \subset \theta$, corresponding to the NF-Layer $\hyper:\calU\rightarrow\calU$.}
\label{table:hnp-varphi}
\end{table}

\begin{table}
\begin{tabular}{c !{\vrule width 1.5pt} c|c|c}
  \multicolumn{4}{c}{$\idx{\psi}{ij}{sp}$} \\
  \hhline{-|---}
  \hhline{-|---}
  \hhline{-|---}
  \multirow{3}{*}{$s=i$} & & $1\leq i<L$ & $i=L$ \\
  \cline{3-4}
    & & $\begin{cases}
        \aaa{i,i}{\psi} & j\neq p \\
        \bb{i,i}{\psi} & j=p \\
    \end{cases}$
    & $ \bb{L,L,j,p}{\psi}$ \\
  \hhline{-|---}
  \hhline{-|---}
  \hhline{-|---}
  \multirow{2}{*}{other $s$} & $1\leq i < L,s=L$ & $i=L, 1 \leq s < L$ & $1 \leq i < L, 1\leq s < L$ \\
  \cline{2-4}
    & $\aaa{i,L,p}{\psi}$
    & $\aaa{L,s,j}{\psi}$
    & $\aaa{i,s}{\psi}$ \\
\end{tabular}
\caption{HNP-equivariant parameter sharing on $\psi \subset \theta$, corresponding to the NF-Layer $\hyper:\calU\rightarrow\calU$.}
\label{table:hnp-psi}
\end{table}
\endgroup

\subsection{Invariant NF-Layer}
While the NP-invariant NF-Layer sums over the rows and columns of every weight and bias, under HNP assumptions there is no need to sum over the columns of $\wtmat{1}$ (inputs) or the rows of $\wtmat{L}, \bsvec{L}$ (outputs).  So the HNP invariant NF-Layer $\inv:\calU\rightarrow\R^{2L + n_0 + 2n_L}$ is defined:
\begin{equation}
    \label{eq:pooling-nonio}
    \inv(U) = \paren{\invio(U), \wt{1}{\star,:}, \wt{L}{:,\star}, \bsvec{L}},
\end{equation}
where $\wt{1}{\star,:}$ and $\wt{L}{:,\star}$ denote summing over only the rows or only the columns of the matrix, respectively. Note that $\inv$ satisfies $\group$-invariance without satifying $\iogroup$-invariance.

\section{Additional experimental details}
\subsection{Predicting generalization}

The model we use consists of three equivariant NF-Layers with 16, 16, and 5 channels respectively. We apply ReLU activations after each linear NF-Layer. The resulting weight space features are passed into an invariant NF-Layer with mean pooling. The output of the invariant NF-Layer is flattened and projected to $\R^{1,000}$. The resulting vector is then passed through an MLP with two hidden layers, each with 1,000 units and ReLU activations. The output is linearly projected to a scalar and passed through a sigmoid function.
Since the output of the model can be interpreted as a probability, we train the model with binary cross-entropy with hyperparameters outlined in Table~\ref{tab:hparam-cnn}. The model is trained for 50 epochs with early stopping based on $\tau$ on the validation set, which takes $1$ hour on a Titan RTX GPU.

\begin{table}[h]
    \centering
    \begin{tabular}{lr}
    \toprule
       \textbf{Name}
       & \textbf{Values}   \\
      \midrule
      Optimizer & Adam   \\
      Learning rate & $0.001$   \\
      Batch size & 8 \\
      Loss & Binary cross-entropy \\
      Epoch & 50 \\
      \bottomrule
    \end{tabular}
    \caption{Hyperparameters for predicting generalization on Small CNN Zoo.}
    \label{tab:hparam-cnn}
\end{table}

\subsection{Predicting ``winning ticket'' masks from initialization}
Concretely, the encoder learns the posterior distribution $q_\theta(Z \mid U_0, M)$ where $Z \in \R^{\dim(\calU) \times C}$ is the latent variable for the winning tickets and $C$ is the number of latent channels. The decoder learns $p_\theta(M \mid U_0, Z)$, and both encoder and decoder are implemented using our equivariant NF-Layers. For the prior $p(Z)$ we choose the isometric Gaussian distribution, and train using the evidence lower bound (ELBO):
\begin{align*}
    \mathcal{L}_{\theta}(M, U_0) = \E_{z \sim q_\theta(\cdot \mid U_0, M)} \Big[ \ln p_\theta(M \mid U_0, z) \Big] - \text{D}_\text{KL}\Big( q_\theta(\cdot \mid U_0, M) \mid \mid p(\cdot) \Big).
\end{align*}
The initialization and sparsity mask are concatenated so the input to the encoder $q_\theta$ is $(U,M) \in \R^{\dim(\calU) \times 2}$. After the bottleneck, we concatenate the latent variables and the original mask along the channels, i.e. the decoder input is $(U_0, Z) \in \R^{\dim(\calU) \times (C + 1)}$.

The first dataset uses three-layer MLPs with 128 hidden units trained on MNIST and the second uses CNNs with three convolution layers (128 channels) and 2 fully-connected layers trained on CIFAR-10. In each dataset, we include 400 pairs for training and hold out 50 for evaluation.
The hyperparameter details are in Table~\ref{tab:hparam-lth}. The encoder and decoder models contain 4 equivariant NF-Layers with 64 hidden channels within each layer. The latent variable is 5 dimensions. Training takes 5H on a Titan RTX GPU.

\begin{table}[h]
    \centering
    \begin{tabular}{lr}
    \toprule
       \textbf{Name}
       & \textbf{Values}   \\
      \midrule
      Optimizer &  Adam  \\
      Learning rate &  $1\times 10^{-3}$  \\
      Batch size & [4, 8] \\
      Epoch & 200 \\
      \bottomrule
    \end{tabular}
    \caption{Hyperparameters for predicting LTH on MNIST and CIFAR-10.}
    \label{tab:hparam-lth}
\end{table}

\subsection{Classifying INRs}

We use SIREN~\citep{sitzmann2020implicit} for our INRs of CIFAR, FashionMNIST, and MNIST. For the SIREN models, we used a three-layer architecture with 32 hidden neurons in each layer. We trained the SIRENs for 5,000 steps using Adam optimizer with a learning rate of $5\times 10^{-5}$. Datasets were split into 45,000 training images, 5,000 validation images, and 10,000 (MNIST, CIFAR) or 20,000 (FashionMNIST) test images.
We trained 10 copies (MNIST, FashionMNIST) or 20 copies (CIFAR-10) of SIRENs on each training image with different initializations, and a single SIREN on each validation and test image. No additional data augmentation was applied.
For 3D shape classification, we adopt the same protocol introduced in~\citep{2023inr2vec}, and we train each SIREN to fit the Unsigned Distance Function (UDF) value of points sampled around a shape. Each SIREN is composed of a single hidden layer with 128 neurons. We use Adam as an optimizer and we train for 1,000 steps.

We also trained neural functionals with three equivariant NF-Layers + ReLU activations, each with 512 channels, followed by invariant NF-Layers (mean pooling) and a three-layer MLP head with 1,000 hidden units and ReLU activation. Dropout was applied to the MLP head only. For the \genours IO-encoding, we used sinusoidal position encoding with a maximum frequency of 10 and 6 frequency bands (dimension 13). The training hyperparameters are shown in Table~\ref{tab:hparam-inr}, and training took $\sim 4$H on a Titan RTX GPU.

\begin{table}[h]
    \centering
    \begin{tabular}{lr}
    \toprule
       \textbf{Name}
       & \textbf{Values}   \\
      \midrule
      Optimizer &  Adam  \\
      Learning rate &  $1\times 10^{-4}$  \\
      Batch size & 32 \\
      Training steps & $2\times 10^{5}$  \\
      MLP dropout & $0.5$\\
      \bottomrule
    \end{tabular}
    \caption{Hyperparameters for classifying INRs on MNIST and CIFAR-10 using neural functionals.}
    \label{tab:hparam-inr}
\end{table}

We also experimented with larger MLPs (4,000 and 8,000 hidden units per layer) that have parameter counts comparable to those of the NFNs, but found that it did not significantly increase test accuracy, as shown in Table \ref{tab:mlp-inr-classification}.
\begin{table}
\centering
    \caption{Classification train and test accuracies (\%) for datasets of implicit neural representations (INRs) of either MNIST or CIFAR-10. Although permutation augmentations slightly increase performance by reducing overfitting, even the larger MLPs are unable to robustly classify INRs. Uncertainties indicate standard error over three runs.}
\begin{tabular}{ccrrrr}
      \toprule
     & & MLP-4000 & $\text{MLP-4000}_\text{Aug}$ & MLP-8000 & $\text{MLP-8000}_\text{Aug}$\\
      \midrule
      \multirow{2}{*}{\centering CIFAR-10} & Train & $30.4\pm0.521$ & $20.5\pm0.333$ & $35.9\pm0.868$ & $18.1\pm0.347$  \\
        & Test & $17.1 \pm 0.120$ & $19.3 \pm 0.325$ & $17.3 \pm 0.280$ & $19.6 \pm 0.060$ \\
        \midrule
      \multirow{2}{*}{\centering MNIST} & Train & $72.6\pm1.39$ & $19.4\pm1.39$ & $77.8\pm1.74$ & $20.3\pm3.30$\\
        & Test & $15.5 \pm 0.090$ & $21.1 \pm 0.010$ & $15.8 \pm 0.014$ & $21.3 \pm 0.075$\\
      \bottomrule
    \end{tabular}
    \label{tab:mlp-inr-classification}
\end{table}

\subsection{Weight space style editing}

For weight space editing, we use the same INRs as the ones used for classification but we do not augment the dataset with additional INRs. Let $U_i$ be the INR weights for the $i^\text{th}$ image and $\inr(x, y;U)$ be the output of the INR parameterized by $U$ at coordinates $(x,y)$. We edit the INR weights $U_i' = U_i + \gamma \cdot \textsc{NFN}(U_i)$, and $\gamma$ is a learned scalar initialized to $0.01$. Letting $f_i(x, y)$ be the pixel values of the ground truth edited image (obtained from image-space processing), the objective is to minimize mean squared error:
\begin{align}
    \mathcal{L}\paren{\textsc{NFN}} = \frac{1}{N \cdot d^2} \sum_{i=1}^N \sum_{x,y}^{d} \| \inr\paren{x, y;U'} - f_i(x,y)\|_2^2.
\end{align}
Note that since the SIREN itself is differentiable, the loss can be directly backpropagated through $U'$ to the parameters of the \genours{}.

The neural functionals contain 3 equivariant NF-Layers with 128 channels, one invariant NF-Layer (mean pooling) followed by 4 linear layers with 1,000 hidden neurons. Every layer uses ReLU activation. The training hyperparameters can be found in Table~\ref{tab:hparam-editing}, and training takes $\sim 1$ hour on a Titan RTX GPU.

\begin{table}[h]
    \centering
    \begin{tabular}{lr}
    \toprule
       \textbf{Name}
       & \textbf{Values}   \\
      \midrule
      Optimizer &  Adam  \\
      Learning rate &  $1\times 10^{-3}$  \\
      Batch size & 32 \\
      Training steps & $5\times 10^{4}$  \\
      \bottomrule
    \end{tabular}
    \caption{Hyperparameters for weight space style editing using neural functionals.}
    \label{tab:hparam-editing}
\end{table}

\section{Additional experiments and analysis}
\begin{table}
    \centering
    \caption{Classification train and test accuracies (\%) for implicit neural representations of MNIST, FashionMNIST, and CIFAR-10. Our equivariant NFNs outperform the MLP baselines, even when the MLP has permutation augmentations to encourage invariance. Uncertainties indicate standard error over three runs.}
        \begin{tabular}{ccrrrr}
      \toprule
       & & \fullours & \ours & 
\mlp & \mlpaug \\
      \midrule
      \multirow{2}{*}{\centering CIFAR-10} & Train & $75.5\pm0.810$ & $66.0\pm0.694$ & $23.7\pm2.39$ & $19.1\pm1.75$ \\
        & Test & $44.1 \pm 0.471$ & $\mathbf{46.6 \pm 0.072}$ & $16.9 \pm 0.250$ & $18.9 \pm 0.432$ \\
        \midrule
      \multirow{2}{*}{\centering MNIST} & Train & $94.9\pm0.579$ & $95.0\pm0.115$ & $42.4\pm2.44$ & $20.5\pm0.401$ \\
        & Test & $92.5 \pm 0.071$ & $\mathbf{92.9 \pm 0.218}$ & $14.5 \pm 0.035$ & $21.0 \pm 0.172$ \\
        \midrule
      \multirow{2}{*}{\centering FashionMNIST} & Train & $82.3\pm2.78$ & $81.8\pm0.868$ & $44.5\pm2.17$ & $14.9\pm1.45$ \\
        & Test & $72.7\pm1.53$ & $\mathbf{75.6\pm1.07}$ & $12.5\pm0.111$ & $15.9\pm0.181$ \\
      \bottomrule
    \end{tabular}
    \label{tab:inr-classification-full}
\end{table}
\begin{table}
    \centering
    \caption{Classification train and test accuracies (\%) for datasets of implicit neural representations (INRs) of either ShapeNet-10~\citep{shapenet2015} or ScanNet-10~\citep{dai2017scannet} Our equivariant NFNs outperform the MLP baselines and recent non-equivariant methods such as inr2vec \citep{2023inr2vec}. Uncertainties indicate standard error over three runs.}
    \scalebox{0.92}{
        \begin{tabular}{ccrrrrr}
      \toprule
       & & \fullours & \ours & 
\mlp & \mlpaug & inr2vec\citep{2023inr2vec} \\
      \midrule
      \multirow{2}{*}{\centering ShapeNet-10} & Train & $100\pm0.0$ & $100.0\pm0.0$ & $100.0\pm0.0$ & $34.0\pm0.0$ & $99.0\pm0.0$\\
        & Test & $86.9\pm0.860$ & $\mathbf{88.7\pm0.461}$ & $25.4\pm0.121$ & $33.8\pm0.126$ & $39.1\pm 0.385$\\
        \midrule
      \multirow{2}{*}{\centering ScanNet-10} & Train & $100.0\pm0.0$ & $100.0\pm0.0$ & $100.0\pm0.0$ & $42.7\pm0.012$ & $93.8\pm0.090$ \\
        & Test & $64.1\pm0.572$ & $\mathbf{65.9\pm1.10}$ & $ 32.9\pm0.351$ & $45.5\pm0.126 $ & $38.2\pm0.409$ \\
      \bottomrule
    \end{tabular}}
    \label{tab:inr_3d-classification-full}
\end{table}

\subsection{Interpreting learned lottery ticket masks}
\label{appendix:lth-analysis}
We further analyze the behavior of \ptours on lottery ticket mask prediction by plotting the mask score predicted for a given initialization value at each layer. To make the visualization clear we train \ptours on MLP mask prediction without layer norm, which can be viewed as a scalar function of the initialization $f^{(i)}: \R \rightarrow \R$ for each layer $i$. Figure~\ref{tab:pt_vis} plots, for a fixed latent value, the predicted mask score as a function of the initialization value (low mask scores are pruned, while high mask scores are not). These plots suggest that, in the MLP setting, neural functionals are learning something similar to magnitude pruning of the initialization. In our setting, this turns out to be a strong baseline for lottery ticket mask prediction: the test accuracy of models pruned with the modified network is 95.0\%.

\begin{figure}
    \centering
    \includegraphics[width=\textwidth]{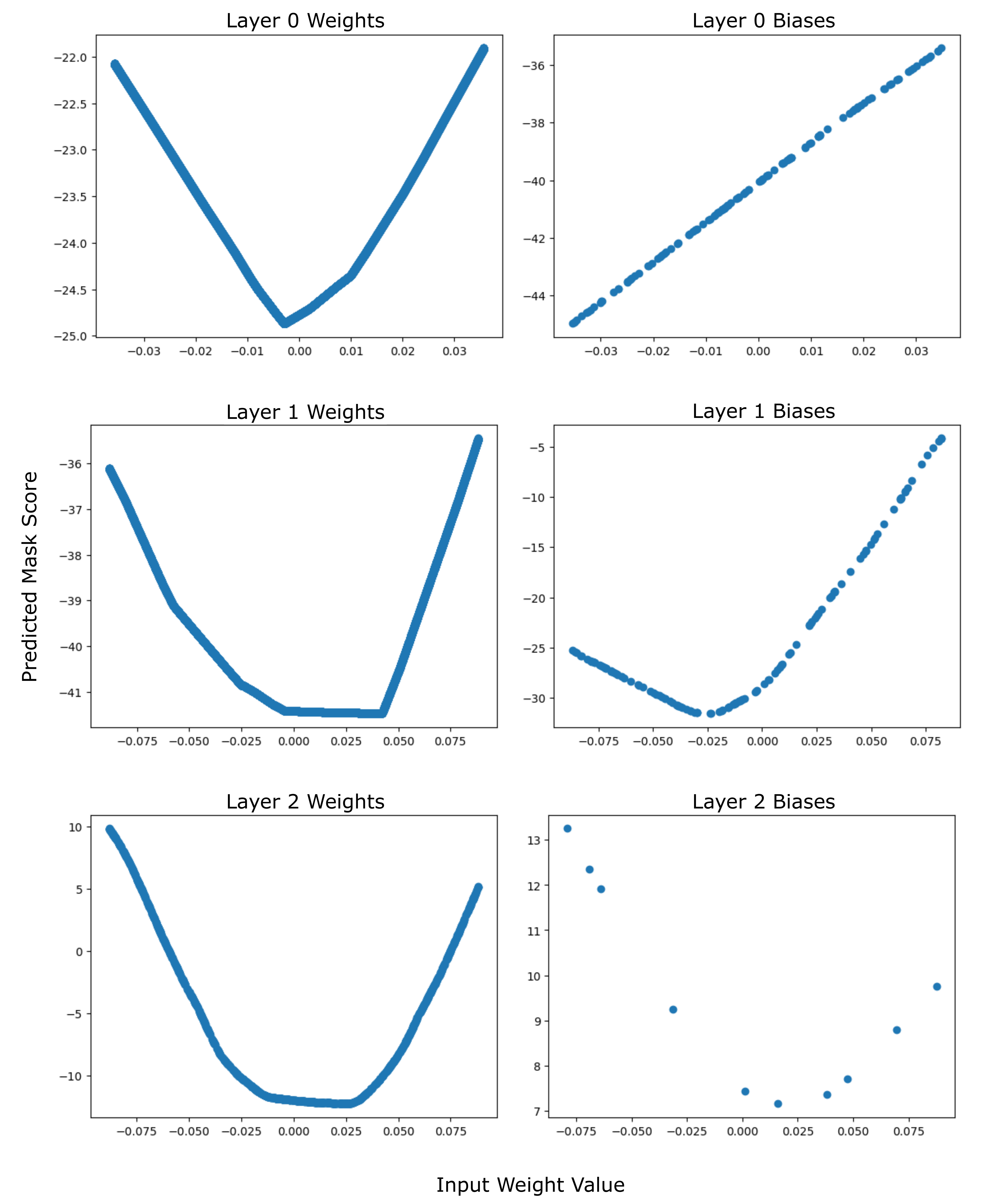}
    \caption{Mask scores vs weight magnitude for modified \ptours. }
    \label{tab:pt_vis}
\end{figure}

\subsection{Predicting MLP generalization from weights}
\begin{table}[]
    \centering
    \caption{Kendall's $\tau$ coefficient and $R^2$ between predicted and actual test accuracies of three- and five-layer MLPs trained on MNIST. Our equivariant neural functionals outperform the baseline from \citep{unterthiner2020predicting} which predicts generalization using only simple weight statistics as features. Uncertainties indicate standard error over five runs.}
    \begin{tabular}{ccrrr}
      \toprule
       & & \fullours & \ours & 
\statnet \\
      \midrule
      \multirow{2}{*}{\centering$\tau$} & 3-Layer & $\mathbf{0.876\pm 0.003}$ & $0.859 \pm 0.002$ & $0.854\pm 0.002$  \\
        & 5-Layer & $\mathbf{0.871\pm 0.001}$ & $0.855\pm0.001$ & $0.860\pm0.001$ \\
        \midrule
      \multirow{2}{*}{\centering$R^2$} & 3-Layer & $\mathbf{0.957 \pm 0.003}$ & $0.9424\pm 0.003$ & $0.937 \pm 0.002$ \\
        & 5-Layer & $\mathbf{0.956 \pm 0.002}$ & $0.947 \pm 0.001$ & $0.950 \pm 0.001$ \\
      \bottomrule
    \end{tabular}
    \label{tab:mlp}
\end{table}
\label{app:add_exp}
In addition to predicting generalization on the Small CNN Zoo benchmark (Section \ref{sec:pred_cnn_gen}), we also construct our own datasets to evaluate predicting generalization on MLPs. Specifically, we study three- and five-layer MLPs with 128 units in each hidden layer. For each of the two architectures, we train 2,000 MLPs on MNIST with varying optimization hyperparameters, and save 10 randomly-selected checkpoints from each run to construct a dataset of 20,000 (weight, test accuracy) pairs. Runs are partitioned according to a $90\%$ / $10\%$ split for training and testing.\\

We evaluate \fullours and \ours on this task and compare them to the \statnet baseline \citep{unterthiner2020predicting} which predicts test accuracy from hand-crafted features extracted from the weights. Table \ref{tab:mlp} shows that \ours and \statnet are broadly comparable, while \fullours consistently outperform other methods across both datasets in two measures of correlation: Kendall's tau and $R^2$. These results confirm that processing the raw weights with permutation equivariant neural functionals can lead to greater predictive power when assessing generalization from weights.

\end{document}